\documentclass{article}

\usepackage[final]{neurips_2022}

\usepackage[utf8]{inputenc} %
\usepackage[T1]{fontenc}    %
\usepackage{hyperref}       %
\usepackage{url}            %
\usepackage{booktabs}       %
\usepackage{amsfonts}       %
\usepackage{nicefrac}       %
\usepackage{microtype}      %
\usepackage{xcolor}         %

\usepackage{amsthm}
\usepackage{amsfonts}
\usepackage{amsmath}
\usepackage{graphicx}
\usepackage{comment}
\usepackage{xspace}
\usepackage{xcolor}
\usepackage{tikz}
\usetikzlibrary{arrows,automata,shapes.geometric,shapes.multipart,fit}
\usetikzlibrary{arrows.meta}
\usetikzlibrary{positioning}
\tikzset{>={Latex[width=1.5mm,length=2mm]}}
\tikzstyle{processstep} = [rectangle, rounded corners, minimum width=2.5cm, minimum height=1cm,text centered, draw=black, fill=white!100]
\tikzstyle{inputstep} = [rectangle split, rounded corners, minimum width=2.1cm, minimum height=2cm,text centered, draw=black, fill=white!100, rectangle split parts=2]
\tikzstyle{outputstep} = [rectangle split, rounded corners, minimum width=2.1cm, minimum height=2cm,text centered, draw=black, fill=white!100, rectangle split parts=2]
\tikzstyle{donestep} = [rectangle, rounded corners, minimum width=2.5cm, minimum height=1cm,text centered, draw=black, fill=green!40]
\usepackage{subcaption}

\captionsetup[subfigure]{justification=RaggedRight}
\usepackage{booktabs}
\usepackage{tabularx}
\usepackage{mdframed}
\usepackage{pgfplots}
\pgfplotsset{compat=newest}
\usepgfplotslibrary{fillbetween}
\usepackage{algorithm}
\usepackage{algpseudocode}
\usepackage{microtype}
\usepackage{import}
\usepackage{standalone}
\usepackage{pifont}
\usepackage{wrapfig}
\usepackage{nicefrac}
\usepackage{mathtools}
\setlength\marginparwidth{45pt}

\newcommand{\ie}{i.e.\@\xspace}

\newcommand{\eg}{e.g.\@\xspace}

\newcommand{\tool}[1]{\textrm{#1}\xspace}

\newcommand{\R}{\ensuremath{\mathbb{R}}}
\newcommand{\I}{\ensuremath{\mathbb{I}}}
\newcommand{\dist}[1]{\ensuremath{\mathcal{D}(#1)}}
\newcommand{\param}{\ensuremath{\alpha}}
\newcommand{\pacerror}{\ensuremath{\gamma}}

\newcommand{\transfunc}{\ensuremath{P}}
\newcommand{\utransfunc}{\ensuremath{\mathcal{P}}}
\newcommand{\up}{\ensuremath{\overline{\utransfunc}}}
\newcommand{\lp}{\ensuremath{\underline{\utransfunc}}}
\newcommand{\un}{\ensuremath{\overline{n}}}
\renewcommand{\ln}{\ensuremath{\underline{n}}}

\renewcommand{\succ}{\ensuremath{\mathbf{Succ}}}

\newcommand{\mdp}{\ensuremath{(S,s_I,A,\transfunc,R)}}
\newcommand{\umdp}{\ensuremath{(S,s_I,A,\I,\utransfunc,R)}}
\newcommand{\strat}{\ensuremath{\pi}}
\newcommand{\traj}{\ensuremath{\tau}}

\newcommand{\eventually}{\lozenge \, }

\newcommand{\until}{\mbox{$\, {\sf U}\,$}}

\newtheorem{definition}{Definition}
\newtheorem{theorem}{Theorem}
\newtheorem{proposition}{Proposition}
\newtheorem{assumption}{Assumption}{\bfseries}{\itshape}
\newtheorem{lemma}{Lemma}

\newcommand{\Pmax}{\ensuremath{\mathbb{P}_\text{Max}}}
\newcommand{\Pmin}{\ensuremath{\mathbb{P}_\text{Min}}}
\newcommand{\Pmaxmax}{\ensuremath{\mathbb{P}_\text{MaxMax}}}
\newcommand{\Pmaxmin}{\ensuremath{\mathbb{P}_\text{MaxMin}}}
\newcommand{\Pminmax}{\ensuremath{\mathbb{P}_\text{MinMax}}}
\newcommand{\Pminmin}{\ensuremath{\mathbb{P}_\text{MinMin}}}

\newcommand{\Rmax}{\ensuremath{\text{R}_\text{Max}}}
\newcommand{\Rmin}{\ensuremath{\text{R}_\text{Min}}}

\newcommand{\spec}{\ensuremath{\varphi}}
\newcommand{\target}{\ensuremath{T}}

\algnewcommand{\LineComment}[1]{\State \small\textcolor{gray}{\textit{$\triangleright$ {#1}}}}

\newcommand{\pluseq}{\mathrel{{+}{=}}}

\newcommand{\cmark}{\ding{51}}
\newcommand{\xmark}{\ding{55}}

\graphicspath{{plots/}}
\bibliographystyle{plainnat}

\title{Robust Anytime Learning of\\ Markov Decision Processes
\thanks{Supported by NWO OCENW.KLEIN.187: ``Provably Correct Policies for Uncertain Partially Observable Markov Decision Processes'', NWA.1160.18.238 (PrimaVera) and by the ERC under the European Union's Horizon 2020 research and innovation programme (FUN2MODEL, grant agreement No. 834115).}
}

\author{
Anonymous
}
\author{
	Marnix Suilen\\
	Department of Software Science\\
	Radboud University\\
	Nijmegen, The Netherlands\And 
	Thiago D. Sim{\~a}o\\
	Department of Software Science\\
	Radboud University\\
	Nijmegen, The Netherlands\And 
	David Parker\\
	Department of Computer Science\\
    University of Oxford\\
    Oxford, United Kingdom\And
    Nils Jansen\\
	Department of Software Science\\
	Radboud University\\
	Nijmegen, The Netherlands
}

\begin{document}

\maketitle

\begin{abstract}
	Markov decision processes (MDPs) are formal models commonly used in sequential decision-making.
	MDPs capture the stochasticity that may arise, for instance, from imprecise actuators via probabilities in the transition function.
	However, in data-driven applications, deriving precise probabilities from (limited) data introduces statistical errors that may lead to unexpected or undesirable outcomes.
	Uncertain MDPs (uMDPs) do not require precise probabilities but instead use so-called uncertainty sets in the transitions, accounting for such limited data.
	Tools from the formal verification community efficiently compute robust policies that provably adhere to formal specifications, like safety constraints, under the worst-case instance in the uncertainty set.
	We continuously learn the transition probabilities of an MDP in a robust anytime-learning approach that combines a dedicated Bayesian inference scheme with the computation of robust policies.
	In particular, our method (1) approximates probabilities as intervals, (2) adapts to new data that may be inconsistent with an intermediate model, and (3) may be stopped at any time to compute a robust policy on the uMDP that faithfully captures the data so far.
	Furthermore, our method is capable of adapting to changes in the environment.
	We show the effectiveness of our approach and compare it to robust policies computed on uMDPs learned by the UCRL2 reinforcement learning algorithm in an experimental evaluation on several benchmarks.

\end{abstract}

\section{Introduction}\label{sec:introduction}

Sequential decision-making in realistic scenarios is inherently subject to uncertainty, commonly captured via probabilities.
\emph{Markov decision processes} (MDPs) are the standard model to reason about such decision-making problems~\citep{DBLP:books/wi/Puterman94,DBLP:books/lib/Bertsekas05}.
Safety-critical scenarios require assessments of correctness which can, for instance, be described by temporal logic~\citep{Pnueli77} or expected reward specifications.
A fundamental requirement for providing such correctness guarantees on MDPs is that probabilities are precisely given. 
Methods such as variants of model-based reinforcement learning~\citep{DBLP:journals/corr/abs-2006-16712} or PAC-learning~\citep{DBLP:journals/jmlr/StrehlLL09} can learn MDPs by deriving \emph{point estimates} of probabilities from data to satisfy this requirement.
This derivation naturally carries the risk of statistical errors.
Optimal policies are highly sensitive to small perturbations in transition probabilities, leading to sub-optimal outcomes such as a deterioration in performance~\citep{DBLP:journals/mansci/MannorSST07,goyal2020robust}.

\emph{Uncertain MDPs} (uMDPs; also known as \emph{interval} or \emph{robust} MDPs) extend MDPs to incorporate such statistical errors by introducing an additional layer of uncertainty via \emph{uncertainty sets} on the transition function~\citep{DBLP:journals/ior/NilimG05,DBLP:journals/mor/WiesemannKR13,goyal2020robust,DBLP:conf/aaai/RigterLH21}.
The solution of a uMDP is a \emph{robust policy} that allows an adversarial selection (\ie, the worst-case scenario) of probabilities within the uncertainty set, and induces a \emph{worst-case performance} (a conservative bound on, \eg, the reachability probability or expected reward).
The problem of computing such robust policies, also called \emph{robust verification}, is solved using value iteration or convex optimization, where the uncertainty sets are convex~\citep{DBLP:conf/cdc/WolffTM12,DBLP:conf/cav/PuggelliLSS13}.

\paragraph{Our approach. }
We study the problem of learning an MDP from data.
We propose an iterative learning method which uses uMDPs as intermediate models and is able to \emph{adapt to new data} which may be inconsistent with prior assumptions.
Furthermore, the method is \textit{task-aware} in the sense that the learning procedure respects temporal logic specifications. 
In particular, our method learns intervals of probabilities for individual transitions.
This Bayesian \emph{anytime-learning approach} employs intervals with linearly updating conjugate priors~\citep{walter2009imprecision}, and can iteratively improve upon a uMDP that approximates the true MDP we wish to learn.
This method not only decreases the size of each interval, but may also increase it again in case of a so-called \emph{prior-data conflict} where new data suggests the actual probability lies outside the current interval.
Consequently, a newly learned interval does not need to be a subset of its prior interval.
This property makes our method especially suitable to learn MDPs where the transition probabilities change.
Alternatively, we also include \emph{probably approximately correct} (PAC) intervals via Hoeffding's inequality~\citep{hoeffding1963probability}, which introduces a correctness guarantee for each transition.

We summarize the key features of our learning method, and what sets it apart from other methods.
\begin{itemize}
	\item \emph{An anytime approach. }
	The ability to iteratively update intervals that are not necessarily subsets of each other allows us to design an \emph{anytime-learning approach}.
	At any time, we may stop the learning and compute a robust policy for the uMDP that the process has yielded thus far, together with the worst-case performance of this policy against a given specification. 
	This performance may not be satisfactory, \eg, the worst-case probability to reach a set of critical states may be below a certain threshold. 
	We continue learning towards a new uMDP that more faithfully captures the true MDP due to the inclusion of further data.
	Thereby, we ensure that the robust policy gradually gets closer to the optimal policy for the true MDP.

	\item \emph{Specification-driven. } Our method features the possibility to learn in a task-aware fashion, that is, to learn transitions that matter for a given specification.
	In particular, for reachability or expected reward (temporal logic) specifications which require a certain set of target states to be reached, we only learn and update transitions along paths towards these states.
	Transitions outside those paths do not affect the satisfaction of the specification.
	
	\item \emph{Adaptive to changing environment dynamics.} When using linearly updating intervals, our approach is adaptive to changing environment dynamics.
	That is, if during the learning process the probability distributions of the underlying MDP change, our method can easily adapt and learns these new distributions.
\end{itemize}

\section{Related Work}\label{sec:related_work}

Uncertain MDPs have (often implicitly) been used by \emph{reinforcement learning} (RL) algorithms, for instance, to optimize the \emph{exploration/exploitation} trade-off by guiding the RL agent towards unexplored parts of the environment, following the \emph{optimism in the face of uncertainty} principle~\citep{DBLP:journals/jmlr/JakschOA10,DBLP:conf/nips/FruitPLB17}.
We use the same principle in the exploration phase of our procedure, but compute robust policies as output to account for the uncertainty in an adversarial (or pessimistic) way.
Similarly, uncertain MDPs are used to compute robust policies when the data available is limited~\citep{DBLP:journals/ior/NilimG05,DBLP:journals/mor/WiesemannKR13,DBLP:conf/nips/PetrikR19}.
Such robustness is connected to a \emph{pessimistic} principle that has been effective in offline RL settings \citep{Lange2012}, where the agent only has access to a fixed dataset of past trajectories, meaning it needs to base decisions on limited data~\citep{Rashidinejad2021,Buckman2021,Jin2021ispessimism}.
Likewise, our method may be stopped early and return a policy before the problem is fully explored.

\emph{Robust RL} concerns the standard RL problem, but explicitly accounts for input disturbances and model errors~\citep{DBLP:journals/neco/MorimotoD05}. 
Often there is a focus on ensuring that a reasonable performance is achieved during data collection.
To that end, \citet{DBLP:conf/nips/LimXM13} and \citet{DBLP:conf/uai/DermanMMM19} sample trajectories using a robust policy, which can slow down the process to find an optimal policy.
We assume sampling access to the underlying MDP, which allows us to use more efficient exploration.
It should be noted that we only use uMDPs as an intermediate model towards learning a standard MDP, whereas in robust RL the model itself may also be an (adversarial) uncertain MDP.
As a result, our method converges to an MDP, while robust RL attempts to learn and possibly converge to an uncertain MDP.

Learning MDPs from data is also related to \emph{model learning}, which typically assumes no knowledge about the states and thus iteratively increases the set of states~\citep{DBLP:journals/cacm/Vaandrager17}.
In~\citep{DBLP:conf/fm/TapplerA0EL19,DBLP:conf/sefm/TapplerMAP21} the $L^*$ algorithm for learning finite automata is adapted for MDPs.
These methods only yield point estimates of probabilities, and make strong assumptions on the structure of the MDP that is being learned.
\citet{DBLP:conf/cav/AshokKW19} use PAC-learning to estimate the transition functions of MDPs and stochastic games in order to perform \emph{statistical model checking} with a PAC guarantee on the resulting value.
In~\citep{DBLP:conf/icmla/BacciILR21} the Baum-Welch algorithm for learning hidden Markov models is adapted to learn MDPs.

Finally, the literature distinguishes two types of uncertainty: \emph{aleatoric} and \emph{epistemic} uncertainty~\citep{DBLP:journals/ml/HullermeierW21}.
Aleatoric uncertainty refers to the uncertainty generated by a probability distribution, like the transition function of an MDP, and is also known as \emph{irreducible uncertainty}.
In contrast, epistemic uncertainty is \emph{reducible} by collecting and accounting for (new) data.
Our work can be seen as adding an additional layer of epistemic uncertainty on the probability distributions of the transition function that is then, by gathering and including more data, reduced.

\section{Preliminaries}\label{sec:preliminaries}
A \emph{discrete probability distribution} over a finite set $X$ is a function $\mu \colon X \to [0,1] \subset \R$ with $\sum_{x \in X} \mu(x) = 1$.
We write $\dist{X}$ for the set of all discrete probability distributions over $X$, and by $|X|$ we denote the number of elements in $X$.
For any closed interval $I \subseteq \R$ we write $\underline{I}$ and $\overline{I}$ for the lower and upper bounds of the interval, that is, $I = [\underline{I}, \overline{I}]$.

\begin{definition}[Markov decision process]
	A \emph{Markov decision process} (MDP) is a tuple $\mdp$ with $S$ a finite set of states, $s_I \in S$ the initial state, $A$ a finite set of actions, $\transfunc \colon S \times A \times S \to [0,1]$ with $\forall s,a \in S\times A,\sum_{s'} P(s,a,s') = 1$ (such that $P(s,a)\in \dist{S}$) the probabilistic transition function, and $R \colon S \times A \to \R_{\geq 0}$ the reward (or cost) function. 
\end{definition}
A \emph{trajectory} in an MDP is a finite sequence $(s_0,a_0,s_1,a_1,\dots,s_n) \in (S \times A)^* \times S$ where $s_0 = s_I$ and $\transfunc(s_i,a_i,s_{i+1}) > 0$ for $0 \leq i < n$.
A \emph{memoryless} policy is a function $\strat \colon S \to \dist{A}$.
If $\strat$ maps to Dirac distributions, then it is a memoryless \emph{deterministic} (or pure) policy. 
Applying a policy to an MDP $M$ resolves all the non-deterministic choices and yields an (induced) \emph{discrete-time Markov chain} (DTMC); see, \eg,~\cite{DBLP:books/wi/Puterman94} for details.
\begin{definition}[Uncertain MDP]\label{def:umdp}
	An \emph{uncertain MDP (uMDP)} is a tuple $\umdp$ where $S, s_I, A, R$ are as for MDPs, $\I$ is a set of probability intervals $\I = \{[a,b] \mid 0 < a \leq b \leq 1\}$, and $\utransfunc \colon S \times A \times S \to (\I \cup \{0\})$ is the \emph{uncertain transition function}, assigning either a probability interval, or the exact probability $0$ to any transition.
\end{definition}
Uncertain MDPs can be seen as an uncountable set of MDPs that only differ in their transition functions.
For a transition function $P$, we write $P \in \utransfunc$ if for every transition the probability of $P$ lies within the interval of $\utransfunc$, \ie, $P(s,a,s') \in \utransfunc(s,a,s')$ for all $(s,a,s') \in S \times A \times S$.
We only allow intervals with a lower bound greater than zero, to ensure a transition cannot vanish under certain distributions generated by the uncertainty.
This assumption is standard in uMDPs~\citep{DBLP:journals/mor/WiesemannKR13,DBLP:conf/cav/PuggelliLSS13}.
Applying a policy to a uMDP yields an induced interval Markov chain~\citep{DBLP:conf/lics/JonssonL91}.

\paragraph{Specifications. } 
We consider \emph{reachability} or \emph{expected reward (cost)} specifications.
The value $\mathbb{P}^M_\pi(\eventually \target)$ is the probability to reach a set of \emph{target} states~$\target \subseteq S$ in the MDP $M$ under the policy $\strat$, also referred to as the \emph{performance of $\strat$}.\footnote{We will omit the policy $\strat$ and the MDP
$M$ whenever they are clear from the context.}
Likewise, $\textrm{R}^M_\pi(\eventually \target)$ describes the expected accumulated reward to reach $\target$ under $\pi$.
Note that for probabilistic specifications, we can replace the formula $\eventually \target$ by more general reach-avoid specifications using the until operator from temporal logic.

Formally, the specification $\Pmax(\eventually \target) = \max_\pi \mathbb{P}_\pi(\eventually \target)$ expresses that the probability of eventually reaching the target set $\target \subseteq S$ should be maximized.
Likewise, the specification $\Rmax(\eventually \target)$ requires the expected reward for reaching $\target$ to be maximized.
For minimization, we write $\Pmin(\eventually \target)$ and $\Rmin(\eventually \target)$, respectively.
Besides optimizing a probability or reward, a specification may also express an explicit user-provided threshold to compare the performance of a policy to.

For uMDPs, we define \emph{optimistic} and \emph{pessimistic} specifications.
In optimistic specifications, we assume the best-case scenario of the uncertainty to satisfy the specification by also minimizing (or maximizing) over the uncertainty set, written as $\Pminmin(\eventually T) = \min_\pi\min_{P \in  \mathcal{P}} \mathbb{P}^{\mathcal{M}[P]}_\pi(\eventually \target)$ (or $\Pmaxmax(\eventually T)$), where the first $\text{Min}$ ($\text{Max}$) signals what the decision-maker is trying to achieve, and the second what the uncertainty does.
In pessimistic specifications, the uncertainty does the opposite of the goal:
$\Pmaxmin(\eventually T)$ or $\Pminmax(\eventually T)$.
The notation is similar for reward specifications.
A (standard) specification $\spec$ can be extended to be optimistic or pessimistic by adding a second $\text{Min}$ or $\text{Max}$. 
We write $\spec_O$ for its optimistic extension, and $\spec_P$ for its pessimistic extension.

For an MDP $M$, the aim is to compute a policy $\strat$ that either optimizes a given specification $\spec$, or whose performance respects a given threshold that, \eg, provides an upper bound on the probability of reaching a set of critical states.
Common methods are value iteration or linear programming~\citep{DBLP:books/wi/Puterman94,DBLP:books/daglib/Baier2008}.
For uncertain MDPs $\mathcal{M}$, the goal is to compute a policy $\strat$ that satisfies an optimistic or pessimistic specification $\spec_O$ or $\spec_P$.
In the latter case, we call $\strat$ a \emph{robust policy}.
Optimal policies in uMDPs can be computed via robust dynamic programming or convex optimization~\citep{DBLP:conf/cdc/WolffTM12,DBLP:conf/cav/PuggelliLSS13}.

\section{Problem Statement and Outline of the Procedure}\label{sec:problem}

We have an unknown but fixed MDP $M = \mdp$, which we will refer to as the \emph{true MDP}, an \emph{initial prior uMDP} $\mathcal{M} = \umdp$, and a specification $\spec$ which we want to satisfy.
A discussion of prior (and other parameter) choices follows in Section~\ref{sec:experiments}.
	\begin{wrapfigure}[9]{r}{0.50\textwidth}
\centering
\resizebox{0.50\textwidth}{!}{
		\begin{tikzpicture}

	\node (input) [inputstep, align=left, text width=3cm] { \textbf{Input} \nodepart{second}
 uMDP  $\mathcal{M}$\\
 Specification $\spec$\\
 Sampling access to true MDP $M$
	};
	
	\node (verification) [processstep, align=center, right=1.5cm of input] {\textbf{Robust Policy Computation}};
	
	\node (exploration) [processstep, below left = 1.5cm and -1.8cm of verification] {\textbf{Sample from} $M$};
	
	\node (learning) [processstep, below right= 1.5cm and -1.8cm of verification] {\textbf{Update $\mathcal{M}$} };

	\node (refinement) [draw, dashed, rounded corners,fit=(verification) (exploration) (learning)] {};
	
	\node[above=0cm of refinement] {\Large\textbf{Anytime-learning procedure}};
	
	\node (output) [outputstep, align=left, text width=3cm, right=1.5cm of verification] { \textbf{Output} \nodepart{second}
Robust policy $\pi$ \\
Learned uMDP  $\mathcal{M}$ \\
Performance of $\pi$
	};
	
	\draw[->] (input) -- (verification);
	\draw[->] (verification) --node[left] {\color{red!80!black}\Huge\xmark$\,$} (exploration);
	\draw[->] (exploration) -- (learning);
	\draw[->] (learning) -- (verification);
	\draw[->] (verification) --node[below,pos=0.3]  {\color{green!80!black}\Huge\cmark} (output);
\end{tikzpicture}
	}	
	\caption{Procedure outline.}
	\label{fig:flowchart}
\end{wrapfigure}
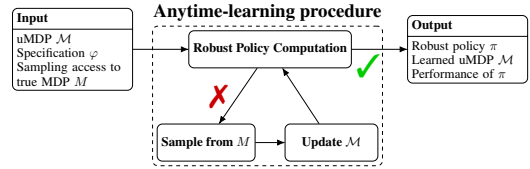

\begin{assumption}[Underlying graph]\label{assumption:graph}
	We assume that the underlying graph of the true MDP is known.
	In particular: transitions that do not exist in the true MDP (transitions of probability $0$) do also not exist in the uMDP, transitions of probability $1$ in the true MDP are assigned the point interval $[1,1]$ in the uMDP, and any other transition of non-zero probability $p$ has an interval $I \in \I$ in the uMDP.
\end{assumption}

\noindent
Under Assumption~\ref{assumption:graph} and Definition~\ref{def:umdp}, we construct the initial prior uMDP $\mathcal{M}$ to have transitions of probability $0$ and $1$ exactly where the true MDP $M$ has these too, and interval transitions $[\varepsilon, 1-\varepsilon]$ for all other transitions, with $\varepsilon > 0$ free to choose. In particular, our approach does not require $\varepsilon$ to be smaller than the smallest probability $p > 0$ occurring in $M$, which we also do not assume to be known. 
Alternatively, in case further knowledge is available, one may use any other prior uMDP as long as it satisfies Assumption~\ref{assumption:graph}. 
Our learning problem is expressed as follows:

\begin{mdframed}
	The problem is to learn the transition probabilities of a true MDP $M$, driven by a specification $\spec$, via intermediate uncertain MDPs $\mathcal{M}$ that are iteratively updated to account for newly collected data, such that at any time a robust policy can be computed.
\end{mdframed}

In the following we outline our anytime-learning procedure as illustrated in Figure~\ref{fig:flowchart}.
\begin{enumerate}
	\item \textbf{Input.} We start with an initial prior  uMDP $\mathcal{M}$ and a specification $\spec$ we wish to verify.
	We assume (black box) access to the unknown \emph{true MDP} $M$ to sample trajectories from, or alternatively, assume a (constant) stream of observations from the true MDP.
	\item \textbf{Robust policy computation.} We compute a robust policy $\strat$ for the pessimistic extension of $\spec$, \ie $\spec_P$, in the uncertain MDP $\mathcal{M}$, together with the worst-case performance of the specification in the current uMDP.
	If the specification contains an explicit threshold, the value can be compared against the threshold for automatic termination.
	In case of specifications that optimize a probability or reward, termination needs to be done manually, as it is impossible to tell if the maximum (minimum) was achieved.
	\item \textbf{Anytime learning.} If the result from step $2$ is unsatisfactory, we start learning:
	\begin{enumerate}
		\item \textbf{Exploration.} We sample one or more trajectories from the true MDP $M$, using the \emph{optimism in the face of uncertainty} principle.
		\item \textbf{Update.} We update the intervals of the uMDP $\mathcal{M}$ in accordance with the newly collected data.
		This update yields a new uMDP that more faithfully captures all collected data up to this point than the previous uMDP.
		\item \textbf{Repeat.} We start again at step $2$ with this new uMDP until (manual) termination.
	\end{enumerate}
	\item \textbf{Output.} The process may be stopped at any moment and yields the latest uMDP $\mathcal{M}$ together with robust policy $\pi$ and the performance of $\pi$ on $\mathcal{M}$.
\end{enumerate}
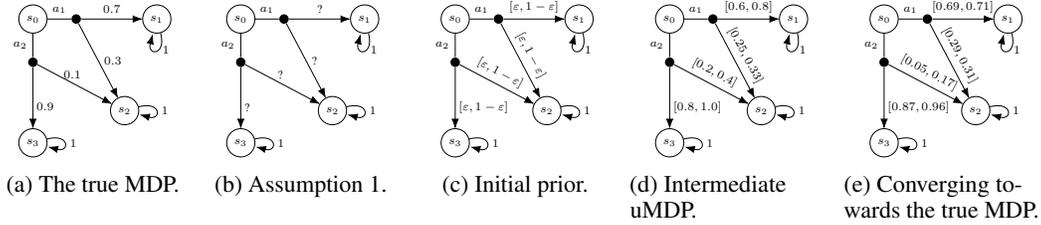
\begin{figure}[t]
	\centering
	\begin{subfigure}[t]{0.19\textwidth}
		\centering
		\resizebox{0.9\textwidth}{!}{
			\begin{tikzpicture}[font=\scriptsize]
	\node[state, inner sep=3pt, minimum size=0pt] (s0) {$s_0$};
	
	\node[circle, inner sep=2pt, fill=black, right=0.5cm of s0] (a1) {};
	\node[circle, inner sep=2pt, fill=black, below=0.5cm of s0] (a2) {};
	
	\node[state, inner sep=3pt, minimum size=0pt, right=2cm of s0] (s1) {$s_1$};
	\node[state, inner sep=3pt, minimum size=0pt, below right = 1.5cm and 1.5cm of s0] (s2) {$s_2$};
	\node[state, inner sep=3pt, minimum size=0pt, below = 2cm of s0] (s3) {$s_3$};

	\draw[-] (s0) --node[above]{$a_1$} (a1);
	\draw[-] (s0) --node[left]{$a_2$} (a2);
	\draw[->] (a1) --node[above]{$0.7$} (s1);
	\draw[->] (a1) --node[right]{$0.3$} (s2);
	\draw[->] (a2) --node[above]{$\!\!0.1$} (s2);
	\draw[->] (a2) --node[right,pos=0.65]{$\!0.9$} (s3);

	\draw[->] (s1) edge[loop below] node[right] {$1$} (s1);
	\draw[->] (s2) edge[loop right] node[right] {$1$} (s2);
	\draw[->] (s3) edge[loop right] node[right] {$1$} (s3);

\end{tikzpicture}
		}
		\caption{The true MDP.}
		\label{fig:procedure_true_mdp}
	\end{subfigure}%
	\hfill
	\begin{subfigure}[t]{0.19\textwidth}
		\centering
		\resizebox{0.9\textwidth}{!}{
			\begin{tikzpicture}[font=\scriptsize]
	\node[state, inner sep=3pt, minimum size=0pt] (s0) {$s_0$};
	
	\node[circle, inner sep=2pt, fill=black, right=0.5cm of s0] (a1) {};
	\node[circle, inner sep=2pt, fill=black, below=0.5cm of s0] (a2) {};
	
	\node[state, inner sep=3pt, minimum size=0pt, right=2cm of s0] (s1) {$s_1$};
	\node[state, inner sep=3pt, minimum size=0pt, below right = 1.5cm and 1.5cm of s0] (s2) {$s_2$};
	\node[state, inner sep=3pt, minimum size=0pt, below = 2cm of s0] (s3) {$s_3$};

	\draw[-] (s0) --node[above]{$a_1$} (a1);
	\draw[-] (s0) --node[left]{$a_2$} (a2);
	\draw[->] (a1) --node[above]{?} (s1);
	\draw[->] (a1) --node[right]{?} (s2);
	\draw[->] (a2) --node[above]{$\!\!$?} (s2);
	\draw[->] (a2) --node[right,pos=0.65]{$\!$?} (s3);

	\draw[->] (s1) edge[loop below] node[right] {$1$} (s1);
	\draw[->] (s2) edge[loop right] node[right] {$1$} (s2);
	\draw[->] (s3) edge[loop right] node[right] {$1$} (s3);

\end{tikzpicture}
		}
		\caption{Assumption~\ref{assumption:graph}.}
		\label{fig:procedure_assumption}
	\end{subfigure}
	\hfill
	\begin{subfigure}[t]{0.19\textwidth}
		\centering
		\resizebox{0.9\textwidth}{!}{
			\begin{tikzpicture}[font=\scriptsize]

	\node[state, inner sep=3pt, minimum size=0pt] (s0) {$s_0$};
	
	\node[circle, inner sep=2pt, fill=black, right=0.5cm of s0] (a1) {};
	\node[circle, inner sep=2pt, fill=black, below=0.5cm of s0] (a2) {};
	
	\node[state, inner sep=3pt, minimum size=0pt, right=2cm of s0] (s1) {$s_1$};
	\node[state, inner sep=3pt, minimum size=0pt, below right = 1.5cm and 1.5cm of s0] (s2) {$s_2$};
	\node[state, inner sep=3pt, minimum size=0pt, below = 2cm of s0] (s3) {$s_3$};

	\draw[-] (s0) --node[above]{$a_1$} (a1);
	\draw[-] (s0) --node[left]{$a_2$} (a2);
	\draw[->] (a1) --node[sloped,above]{$[\varepsilon, 1-\varepsilon]$} (s1);
	\draw[->] (a1) --node[sloped,above]{$[\varepsilon, 1-\varepsilon]$} (s2);
	\draw[->] (a2) --node[sloped,above]{$\!\![\varepsilon, 1-\varepsilon]$} (s2);
	\draw[->] (a2) --node[right,pos=0.65]{$\![\varepsilon, 1-\varepsilon]$} (s3);

	\draw[->] (s1) edge[loop below] node[right] {$1$} (s1);
	\draw[->] (s2) edge[loop right] node[right] {$1$} (s2);
	\draw[->] (s3) edge[loop right] node[right] {$1$} (s3);

\end{tikzpicture}
		}
		\caption{Initial prior.}
		\label{fig:procedure_intial}
	\end{subfigure}
	\hfill
	\begin{subfigure}[t]{0.19\textwidth}
		\centering
		\resizebox{0.9\textwidth}{!}{
			\begin{tikzpicture}[font=\scriptsize]

	\node[state, inner sep=3pt, minimum size=0pt] (s0) {$s_0$};
	
	\node[circle, inner sep=2pt, fill=black, right=0.5cm of s0] (a1) {};
	\node[circle, inner sep=2pt, fill=black, below=0.5cm of s0] (a2) {};
	
	\node[state, inner sep=3pt, minimum size=0pt, right=2cm of s0] (s1) {$s_1$};
	\node[state, inner sep=3pt, minimum size=0pt, below right = 1.5cm and 1.5cm of s0] (s2) {$s_2$};
	\node[state, inner sep=3pt, minimum size=0pt, below = 2cm of s0] (s3) {$s_3$};

	\draw[-] (s0) --node[above]{$a_1$} (a1);
	\draw[-] (s0) --node[left]{$a_2$} (a2);
	\draw[->] (a1) --node[sloped,above]{$[0.6, 0.8]$} (s1);
	\draw[->] (a1) --node[sloped,above]{$[0.25, 0.33]$} (s2);
	\draw[->] (a2) --node[sloped,above]{$\!\![0.2, 0.4]$} (s2);
	\draw[->] (a2) --node[right,pos=0.65]{$\![0.8, 1.0]$} (s3);

	\draw[->] (s1) edge[loop below] node[right] {$1$} (s1);
	\draw[->] (s2) edge[loop right] node[right] {$1$} (s2);
	\draw[->] (s3) edge[loop right] node[right] {$1$} (s3);

\end{tikzpicture}
		}
		\caption{Intermediate uMDP.}
		\label{fig:procedure_intermediate}
	\end{subfigure}
	\hfill
	\begin{subfigure}[t]{0.19\textwidth}
		\centering
		\resizebox{0.9\textwidth}{!}{
			\begin{tikzpicture}[font=\scriptsize]

	\node[state, inner sep=3pt, minimum size=0pt] (s0) {$s_0$};
	
	\node[circle, inner sep=2pt, fill=black, right=0.5cm of s0] (a1) {};
	\node[circle, inner sep=2pt, fill=black, below=0.5cm of s0] (a2) {};
	
	\node[state, inner sep=3pt, minimum size=0pt, right=2cm of s0] (s1) {$s_1$};
	\node[state, inner sep=3pt, minimum size=0pt, below right = 1.5cm and 1.5cm of s0] (s2) {$s_2$};
	\node[state, inner sep=3pt, minimum size=0pt, below = 2cm of s0] (s3) {$s_3$};

	\draw[-] (s0) --node[above]{$a_1$} (a1);
	\draw[-] (s0) --node[left]{$a_2$} (a2);
	\draw[->] (a1) --node[sloped,above]{$[0.69, 0.71]$} (s1);
	\draw[->] (a1) --node[sloped,above]{$[0.29, 0.31]$} (s2);
	\draw[->] (a2) --node[sloped,above]{$\!\![0.05, 0.17]$} (s2);
	\draw[->] (a2) --node[right,pos=0.65]{$\![0.87, 0.96]$} (s3);

	\draw[->] (s1) edge[loop below] node[right] {$1$} (s1);
	\draw[->] (s2) edge[loop right] node[right] {$1$} (s2);
	\draw[->] (s3) edge[loop right] node[right] {$1$} (s3);

\end{tikzpicture}
		}
		\caption{Converging towards the true MDP.}
		\label{fig:procedure_convergence}
	\end{subfigure}
	\caption{Process flow on an example MDP.}
	\label{fig:procedure_example}
\end{figure}
The effects of this procedure are illustrated in Figure~\ref{fig:procedure_example}.
In~\ref{fig:procedure_true_mdp}, we see an example MDP $M$ to learn, and~\ref{fig:procedure_assumption} shows the assumed knowledge about $M$.
\ref{fig:procedure_intial} shows the initial uMDP $\mathcal{M}$ constructed from~\ref{fig:procedure_assumption}, using a (symbolic or explicit) lower bound $\varepsilon > 0$ to ensure that all lower bounds of $\mathcal{M}$ are strictly greater than zero.
In \ref{fig:procedure_intermediate}, we see an intermediate learned uMDP.
Some intervals may already have successfully converged towards the probability of that transition in the true MDP, while others may be very inaccurate due to a low sample size and thus a bad estimate.
Finally, \ref{fig:procedure_convergence} depicts the learned uMDP converging towards the true MDP.

\section{Bayesian Learning}\label{sec:learning}
We assume access to trajectories through the true MDP. 
Due to the Markov property of the MDP, \ie, the fact that the transition probabilities only depend on the current state and not on any further history, we may split each trajectory $\traj$ into separate sets of independent experiments where a state-action pair $(s,a)$ is sampled and a successor state $s_i$ is observed; see also Appendix A of~\citep{DBLP:journals/jcss/StrehlL08}.
We count the number of occurrences of the transition $(s,a,s_i)$ in each trajectory $\traj$, and the number of occurrences of the state-action pair $(s,a)$ in each trajectory $\traj$, denoted by $\#(s,a,s_i)$ and $\#(s,a)$, respectively. In the following, we introduce the two approaches of learning intervals: the standard approach of PAC learning via Hoeffding's inequality which provides a PAC guarantee on the value of the learned model, and a new approach using \emph{linearly updating intervals} (LUI), which is more flexible due to the inclusion of \emph{prior-data conflicts} and its closure properties.
Finally, we introduce a minor modification to the LUI approach, making it also applicable to situations where the underlying MDP changes over time.
Such a modification is not possible in PAC learning as Hoeffding's inequality assumes independent samples from a fixed distribution.

\subsection{Learning PAC Intervals}
We use the standard method of maximum a-posteriori probability (MAP) estimation to infer point estimates of probabilities; 
see Appendix~\ref{app:map_estimation} for details.
These point estimates can then easily be turned into \emph{probably approximately correct} (PAC) intervals via Hoeffding's inequality.
Given $N = \#(s,a)$ samples and a fixed \emph{error rate} $\pacerror$, we use Hoeffding's inequality~\citep{hoeffding1963probability} to compute the interval size $
    \delta_P = \sqrt{\nicefrac{\log(\nicefrac{2}{\pacerror_P})}{2N}}$.
This $\delta_P$ may then be used to construct intervals that are PAC with respect to the true transition probability.
To lift the PAC guarantee to the entire learned MDP, and consequently also the optimal value for some specification, we need to distribute $\pacerror$ over all transitions with probabilities strictly between zero and one, \ie, $
    \pacerror_P = \nicefrac{\pacerror}{\sum_{(s,a) \in S \times A} |\succ_{>1}(s,a)|}$, where $\succ_{>1}(s,a) = \{s' \in S \mid 0 < P(s,a,s') < 1\}$ is the set of successor states of each $(s,a)$ with more than one successor.\footnote{The published version of this paper contains an error in the formula for $\delta_P$, which has been corrected here.}
Using this $\delta_P$, we then construct the intervals
\begin{align}
    \utransfunc(s,a,s_i) = [\max(\varepsilon, \tilde{P}(s,a,s_i) - \delta_P), \min(\tilde{P}(s,a,s_i) + \delta_P, 1)],\label{eq:pac_interval}
\end{align}
where $\tilde{P}$ is the (MAP) point estimate, and $\varepsilon$ is again a small value to ensure that the interval lower bounds are non-zero. 
As a result, we have the following Proposition, whose proof is a direct application of Hoeffding's inequality. 
\begin{proposition}
    The true MDP $M^*$ lies within the learned uMDP $\mathcal{M}$ with probability greater or equal to $1-\pacerror$.
\end{proposition}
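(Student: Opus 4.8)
The plan is to reduce the claim to a per-transition concentration bound and then combine the per-transition failure probabilities with a union bound. Throughout, fix any state-action pair $(s,a)$ and any successor $s_i$ with $0 < \transfunc(s,a,s_i) < 1$, writing $p = \transfunc(s,a,s_i)$ for the true probability. As in the discussion preceding the statement, the Markov property lets us view the $N = \#(s,a)$ visits to $(s,a)$ as independent Bernoulli experiments, each equal to $1$ exactly when the successor $s_i$ is observed and $0$ otherwise; the empirical frequency $\bar X = \#(s,a,s_i)/N$ is their mean, and under the prior used for MAP estimation (Appendix~\ref{app:map_estimation}) it is the point estimate $\tilde{\transfunc}(s,a,s_i)$ on which the interval in \eqref{eq:pac_interval} is centered.

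First I would apply Hoeffding's inequality to these $N$ independent $[0,1]$-valued variables, conditioning on the (random) count $N = \#(s,a)$. This yields $P(|\bar X - p| \geq \delta_P \mid N) \leq 2\exp(-2N\delta_P^2)$, and substituting $\delta_P = \sqrt{\log(2/\pacerror_P)/(2N)}$ makes the right-hand side exactly $\pacerror_P$. Since the bound holds for every fixed value of $N$, it holds unconditionally, so the unclipped interval $[\tilde{\transfunc} - \delta_P, \tilde{\transfunc} + \delta_P]$ contains $p$ with probability at least $1 - \pacerror_P$.

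Next I would check that clipping the endpoints to $[\varepsilon, 1]$, as in \eqref{eq:pac_interval}, does not increase the failure probability. Because $p \leq 1$ always holds, the constraint $p \leq \min(\tilde{\transfunc} + \delta_P, 1)$ is equivalent to $p \leq \tilde{\transfunc} + \delta_P$; and provided $\varepsilon$ is no larger than $p$, the constraint $p \geq \max(\varepsilon, \tilde{\transfunc} - \delta_P)$ is equivalent to $p \geq \tilde{\transfunc} - \delta_P$. Hence membership of $p$ in the clipped interval coincides with the Hoeffding event, so each learned transition fails with probability at most $\pacerror_P$. Transitions of probability $0$ or $1$ are reproduced exactly under Assumption~\ref{assumption:graph} (as $0$ and $[1,1]$, respectively) and never fail. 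Finally I would union bound over the $\sum_{(s,a)} |\succ_{>1}(s,a)|$ transitions with $0 < \transfunc < 1$: since the budget is split via $\pacerror_P = \pacerror / \sum_{(s,a)} |\succ_{>1}(s,a)|$, the probability that some true probability escapes its interval is at most $\sum \pacerror_P = \pacerror$. Equivalently, with probability at least $1 - \pacerror$ we have $\transfunc(s,a,s') \in \utransfunc(s,a,s')$ for all $(s,a,s')$, which is exactly $\transfunc \in \utransfunc$, i.e. $M^* \in \mathcal{M}$.

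The main thing to get right is not the concentration step itself, which is immediate, but the bookkeeping around it: conditioning on the random visit counts so that Hoeffding's fixed-sample-size hypothesis applies, confirming that the MAP point estimate coincides with the empirical mean (so the interval is centered correctly rather than merely approximately), and verifying that clipping to $[\varepsilon,1]$ only discards mass that $p$ cannot occupy. The union bound requires no independence across distinct state-action pairs, so these are the only delicate points.
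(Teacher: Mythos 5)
Your proof takes essentially the same approach as the paper's: the paper's entire proof is the remark that the proposition follows by ``a direct application of Hoeffding's inequality,'' with the union bound already built into the definition of $\pacerror_P = \nicefrac{\pacerror}{\sum_{(s,a)} |\succ_{>1}(s,a)|}$, and your write-up (per-transition Hoeffding bound conditional on $N$, the clipping check, then the union bound over transitions) is the natural expansion of exactly that argument.

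One assertion deserves scrutiny, since you yourself flag it as the crux and then discharge it by fiat: the claim that the MAP point estimate coincides with the empirical frequency $\nicefrac{k_i}{N}$. By Appendix~\ref{app:map_estimation}, the MAP estimate is the mode $\tilde{\transfunc}(s,a,s_i) = \bigl((\param_i + k_i) - 1\bigr)/\bigl(\sum_j (\param_j + k_j) - m\bigr)$, which equals $\nicefrac{k_i}{N}$ only when every prior parameter satisfies $\param_i = 1$; with the prior $\param_i = 10$ actually used in the paper's experiments it is $(k_i + 9)/(N + 9m)$, a biased center. Hoeffding's inequality controls deviations of the empirical mean, not of a biased center, so the stated coverage does not transfer verbatim to the interval of Equation~\eqref{eq:pac_interval} unless the prior is uniform with parameters $1$, or the interval is widened by the bias $|\tilde{\transfunc}(s,a,s_i) - \nicefrac{k_i}{N}|$. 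This caveat is inherited from the paper itself, which centers the PAC interval at $\tilde{\transfunc}$ without comment, so it is not a divergence from the paper's proof --- but as written, your proof rests on an identity that is false for general priors. Your handling of the remaining delicate points (conditioning on the random visit count $N$, the explicit proviso $\varepsilon \leq p$ for the lower clip, and the observation that the union bound needs no independence across state-action pairs) matches what the paper implicitly relies on.
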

This is a standard result and also used in, \eg, PAC statistical model checking~\citep{DBLP:conf/cav/AshokKW19}.

\subsection{Learning Linearly Updating Probability Intervals}
We use the Bayesian approach of \emph{intervals with linearly updating conjugate priors}~\citep{walter2009imprecision} to learn intervals of probabilities.
Each uncertain transition $\utransfunc(s,a,s_i)$ is assigned a \emph{prior interval} $\utransfunc_i = [\lp_i, \up_i]$, and a \emph{prior strength interval} $[\ln_i, \un_i]$ that represents a minimum and maximum number of samples on which the prior interval is based.
The greater the values of the strength interval, the more emphasis is placed on the prior, and the more data is needed to significantly change the prior when computing the posterior.
The greater the difference between the $\ln_i$ and $\un_i$, the greater the difference between a \emph{prior-data conflict} and a \emph{prior-data agreement}.

\begin{definition}[Posterior interval computation]\label{def:interval_update}
	The interval $[\lp_i, \up_i]$ can be updated to $[\lp_i', \up_i']$, using $N = \#(s,a)$ and $k_i = \#(s,a,s_i)$, as follows:
	\begin{align}
	\lp_i' = \begin{cases}
	\frac{\un_i \lp_i + k_i}
	{\un_i + N} & \text{ if } \forall j. \frac{k_j}{N} \geq \lp_j \text{ (prior-data agreement),} \\[1ex]
	\frac{\ln_i \lp_i + k_i}
	{\ln_i +N} & \text{ if } \exists j. \frac{k_j}{N} < \lp_j \text{ (prior-data conflict).}
	\end{cases} \label{eq:interval_lowerbound} \\
	\up_i' = \begin{cases}
	\frac{\un_i \up_i + k_i}
	{\un_i + N} & \text{ if } \forall j. \frac{k_j}{N} \leq \up_j \text{ (prior-data agreement),} \\[1ex]
	\frac{\ln_i \up_i + k_i}
	{\ln_i + N} & \text{ if } \exists j. \frac{k_j}{N} > \up_j \text{ (prior-data conflict).}
	\end{cases} \label{eq:interval_upperbound}
	\end{align}
	The strength interval is updated by adding the number of samples $N$ to it: $[\ln_i',\un_i'] = [\ln_i+N,\un_i+N]$.
\end{definition}
The initial values for the priors of each state-action pair can be chosen freely subject to the constraints $0 < \lp_i \leq \up_i \leq 1$, and $\un_i \geq \ln_i \geq 1$.

\paragraph{Key properties of linearly updating probability intervals.}
\begin{itemize}
	\item \emph{Convergence in the infinite run.} Under the assumption that the true MDP does not change, each interval will converge to the exact transition probability when the total number of samples processed tends to infinity, regardless of how many samples are processed \emph{per iteration}~\citep{walter2009imprecision}.
	This assumption is, however, not required for our work.
	If the true MDP changes over time, or is \emph{adversarial} (\ie, a uMDP), our method is still applicable, but will not converge to a fixed MDP.
	\item \emph{Prior-data conflict. } When the estimated probability $\nicefrac{k_i}{N}$ lies outside the current interval, a so-called \emph{prior data conflict} occurs.
	Consequently, if at some point we derive an interval that does not contain the true transition probability, the method will correct itself %
	later on.
	\item \emph{Closure properties under updating. }
	Finally, updating is closed in two specific ways. 
	First, any interval of probabilities is updated again to a valid interval of probabilities, and second, any set of intervals at a state-action pair that contains a valid distribution over the successor states will again contain a distribution over successor states after updating.
\end{itemize}

A key requirement for computing robust policies on uMDPs is that the lower bound of every interval is strictly greater than zero (see Definition~\ref{def:umdp}).
This closure property is formalized as follows.
\begin{theorem}[Closure of intervals under learning]\label{thm:closure_intervals}
	For any valid prior interval $[\lp, \up]$ with $0 < \lp \leq \up \leq 1$, we have that the posterior $[\lp',\up']$ computed via Definition~\ref{def:interval_update} also satisfies $0 < \lp' \leq \up' \leq 1$.
\end{theorem}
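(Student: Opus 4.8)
The plan is to verify the three constituent inequalities $\lp' > 0$, $\up' \leq 1$, and $\lp' \leq \up'$ separately, after fixing one transition $i$ and writing $p = \nicefrac{k_i}{N}$, so that $0 \leq k_i \leq N$ gives $0 \leq p \leq 1$, while recalling $\un_i \geq \ln_i \geq 1$. The first two bounds are immediate and do not depend on the agreement/conflict distinction, since each branch of Definition~\ref{def:interval_update} has the shape $\nicefrac{(m\lp_i + k_i)}{(m + N)}$ for the lower and $\nicefrac{(m\up_i + k_i)}{(m+N)}$ for the upper bound, with $m \in \{\ln_i, \un_i\}$. For strict positivity, the numerator satisfies $m\lp_i + k_i \geq m\lp_i \geq \lp_i > 0$ and the denominator is positive, so $\lp' > 0$. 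For the upper bound, $\up_i \leq 1$ and $k_i \leq N$ give $m\up_i + k_i \leq m + N$, hence $\up' \leq 1$.

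The substantive part is $\lp' \leq \up'$, which I would prove by a case split on the branch selected for each bound. When both bounds use the same strength $m$, the claim is trivial: the two posteriors share the denominator $m + N$, and $\lp_i \leq \up_i$ makes the lower numerator no larger than the upper one. This settles the two ``pure'' cases (both in agreement, or both in conflict). The difficulty is confined to the two mixed cases, where the lower bound is updated in agreement (strength $\un_i$) while the upper bound is in conflict (strength $\ln_i$), or vice versa.

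For the mixed cases I would exploit the fact that the global quantified conditions pin down the relation at the fixed index $i$: lower-agreement ($\forall j.\ \nicefrac{k_j}{N} \geq \lp_j$) forces $p \geq \lp_i$, and upper-agreement forces $p \leq \up_i$. Consider lower-agreement with upper-conflict, so $\lp' = \nicefrac{(\un_i \lp_i + k_i)}{(\un_i + N)}$ and $\up' = \nicefrac{(\ln_i \up_i + k_i)}{(\ln_i + N)}$. From $p \geq \lp_i$ one gets $\lp' \leq p$, since $\lp'$ is a convex combination of $\lp_i$ and $p$. If moreover $p \leq \up_i$, then $\up' \geq p \geq \lp'$ and we are done. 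The genuinely awkward subcase is $p > \up_i$; here I would clear the positive denominators and reduce $\lp' \leq \up'$ to showing $\un_i \ln_i(\up_i - \lp_i) + N(\ln_i \up_i - \un_i \lp_i) + Np(\un_i - \ln_i) \geq 0$. The middle summand can be negative, so the key move is to use $p > \up_i$ together with $\un_i \geq \ln_i$ to bound the last summand below by $N\up_i(\un_i - \ln_i)$; substituting collapses the whole expression to $\un_i(\up_i - \lp_i)(\ln_i + N) \geq 0$. The symmetric case (lower-conflict with upper-agreement) is handled analogously, using $p < \lp_i$ to dominate the corresponding negative term and again factoring out $\un_i(\up_i - \lp_i)(\ln_i + N)$.

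The main obstacle is exactly this pair of mixed-strength subcases: there the two bounds are pulled toward the data with different weights, the naive term-by-term sign analysis fails because the cross term $N(\ln_i \up_i - \un_i \lp_i)$ has indefinite sign, and one must invoke the pointwise agreement/conflict inequality (the comparison of $p$ against $\up_i$ respectively $\lp_i$) to finish the estimate. A cleaner equivalent route I would keep in reserve is to view each bound as a function $m \mapsto \nicefrac{(m\lp_i + k_i)}{(m+N)}$, observe that it is monotone in $m$ and interpolates between $p$ (at $m = 0$) and $\lp_i$ (as $m \to \infty$), and read off the ordering from this monotonicity picture; but the direct cross-multiplication above is the most self-contained.
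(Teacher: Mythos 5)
Your proof is correct, and it is in fact more careful than the paper's own argument. The paper's proof dispatches all three inequalities in one sentence each: positivity of $\lp_i'$ because the numerator cannot vanish when $\ln_i \geq 1$ and $\lp_i > 0$; $\up_i' \leq 1$ from $\up_i \leq 1$ and $k_i \leq N$; and $\lp_i' \leq \up_i'$ ``directly'' from the update equations together with $\lp_i \leq \up_i$, claiming that all of this is independent of the agreement/conflict distinction. That last claim is only immediate when both bounds are updated with the \emph{same} strength (both agreement, strength $\un_i$, or both conflict, strength $\ln_i$), since then the two posteriors share a denominator and one just compares numerators. You correctly observe that the agreement/conflict triggers for the lower and upper bounds are \emph{different} quantified conditions, so mixed cases (e.g.\ $\forall j.\ \nicefrac{k_j}{N} \geq \lp_j$ while $\exists j.\ \nicefrac{k_j}{N} > \up_j$) genuinely occur, the denominators $\un_i + N$ and $\ln_i + N$ then differ, and the one-line comparison no longer applies. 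Your treatment of these mixed cases is sound: the universal (agreement) condition pins down the relation between $p = \nicefrac{k_i}{N}$ and the corresponding prior bound at the fixed index $i$, the convex-combination observation settles the subcase where $p$ lies inside $[\lp_i, \up_i]$, and in the remaining subcase the cross-multiplied expression is correctly bounded below by $\un_i(\up_i - \lp_i)(\ln_i + N) \geq 0$ after dominating the indefinite cross term using $p > \up_i$ (respectively $p < \lp_i$) and $\un_i \geq \ln_i$; I verified the algebra. So where the paper asserts the ordering holds in all cases, you supply the argument that actually establishes it in the only cases where it is not obvious --- a genuine strengthening of the published proof rather than a mere rederivation.
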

Furthermore, we also have closure properties at each state-action pair.
In particular, if we choose our prior intervals such that there is at least one valid probability distribution at the state-action pair, then the posterior intervals will again contain a valid probability distribution.
We formalize this notion by examining the sum of the lower and upper bounds of the intervals in the following Theorem:
\begin{theorem}[Closure of distributions under learning]\label{thm:closure_distributions}
We have the following bounds on the set of posterior distributions.
In case of a prior data agreement, we have that the sum of the posterior bounds is bounded by the sum of the prior bounds, and the value $1$. That is,
\begin{align}
    \sum_i \lp_i \leq \sum_i \lp_i' \leq 1, \qquad
    1 \leq \sum_i \up_i' \leq \sum_i \up_i. \label{eq:agreement_constraint}
\end{align}
In case of a prior-data conflict, the sum of posterior bounds is no longer necessarily bounded by the prior, and we only have
\begin{align}
    \sum_i \lp_i' \leq 1 \leq \sum_i \up_i'. \label{eq:sufficient_constraint}
\end{align}
Note, however, that this last constraint~\eqref{eq:sufficient_constraint} is already sufficient to ensure that there is a valid probability distribution at the state-action pair.
\end{theorem}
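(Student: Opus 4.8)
The plan is to reduce every case to the single identity $\sum_i k_i = N$, which holds because each of the $N$ recorded visits to $(s,a)$ lands in exactly one successor. The workhorse observation is that every posterior bound is a convex combination of the matching prior bound and the empirical frequency $\frac{k_i}{N}$: writing $c$ for the strength used in the relevant branch (so $c=\un_i$ under agreement and $c=\ln_i$ under conflict), Definition~\ref{def:interval_update} rewrites as
\begin{align*}
\lp_i' = (1-w_i)\,\lp_i + w_i\,\tfrac{k_i}{N}, \qquad \up_i' = (1-w_i)\,\up_i + w_i\,\tfrac{k_i}{N}, \qquad w_i = \tfrac{N}{c+N}\in(0,1).
\end{align*}
I would first record this rewriting, together with the feasibility fact that a distribution lying in the box $\prod_i[\lp_i',\up_i']$ exists precisely when $\sum_i \lp_i' \le 1 \le \sum_i \up_i'$; this fact (start at the lower bounds, whose mass is at most $1$, and raise coordinates toward their upper bounds until the mass reaches $1$) turns the closing sentence of the theorem into an immediate corollary of the displayed sum inequalities.

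For the \emph{agreement} case I would argue termwise. Lower-bound agreement means $\frac{k_j}{N}\ge\lp_j$ for \emph{all} $j$, so each $\lp_i'$ is a convex combination of the two ordered numbers $\lp_i\le\frac{k_i}{N}$, giving the sandwich $\lp_i \le \lp_i' \le \frac{k_i}{N}$. Summing over $i$ and using $\sum_i\frac{k_i}{N}=1$ yields $\sum_i\lp_i \le \sum_i\lp_i' \le 1$. The upper chain $1\le\sum_i\up_i'\le\sum_i\up_i$ is symmetric: upper-bound agreement gives $\frac{k_i}{N}\le\up_i$ for every $i$, hence $\frac{k_i}{N}\le\up_i'\le\up_i$, and summing gives the claim. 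This step uses only the uniform sign of $\frac{k_i}{N}-\lp_i$ (resp.\ $\up_i-\frac{k_i}{N}$), so it is insensitive to whether the strengths coincide across successors.

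The \emph{conflict} case is the main obstacle, because the uniform sign is lost: for the successor that triggers the conflict one has $\frac{k_i}{N}<\lp_i$, so the termwise bound $\lp_i'\le\frac{k_i}{N}$ fails there and termwise summation breaks. Here I would instead sum the convex-combination identity directly. Using that the strength $c$, and hence the weight $w_i\equiv w$, is shared across the successors of the state-action pair, the per-coordinate averages assemble into a single average of the aggregates,
\begin{align*}
\sum_i \lp_i' = (1-w)\sum_i\lp_i + w\sum_i\tfrac{k_i}{N} = \sum_i\lp_i + w\Bigl(1-\sum_i\lp_i\Bigr),
\end{align*}
and since $0\le w\le1$ and the prior is feasible with $\sum_i\lp_i\le 1$, this lies in $[\sum_i\lp_i,\,1]$, giving $\sum_i\lp_i'\le1$; the dual computation with $\up_i$ and $\sum_i\up_i\ge1$ gives $\sum_i\up_i'\ge1$. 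I expect the delicate point to be exactly this reliance on a common weight $w$: were the strength intervals allowed to vary per successor, the conflicting coordinate could be damped (large $\ln_i$) while an agreeing coordinate is amplified (small $\ln_i$), and the aggregate lower sum could exceed $1$. The proof should therefore make explicit that $\ln$ and $\un$ are shared within a state-action pair. Combining the feasibility fact from the first paragraph with $\sum_i\lp_i'\le1\le\sum_i\up_i'$ then discharges the final assertion that a valid successor distribution survives every update.
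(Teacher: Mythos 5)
Your proposal is correct, and in the one place where it matters---the conflict case---it takes a genuinely different route from the paper's proof. In the agreement case the two arguments essentially coincide: the paper's substitutions of $\up_i$ by $\nicefrac{k_i}{N}$ (and of $k_i$ by $\up_i N$) are exactly your termwise sandwich $\nicefrac{k_i}{N} \le \up_i' \le \up_i$, and both versions are insensitive to whether the strengths agree across successors. For the conflict case, however, the paper first proves Lemma~\ref{lemma:number_of_conflicts}, partitions the successors into an agreeing set $I_A$ and a conflicting set $I_C$, and argues $\up_i' \ge \up_i$ on $I_A$ and $\up_i' > \up_i$ on $I_C$, concluding $\sum_i \up_i' > \sum_i \up_i \ge 1$; you instead sum the convex-combination identity over all successors at once, using the common weight $w = \nicefrac{N}{\ln + N}$, to get $\sum_i \up_i' = (1-w)\sum_i \up_i + w \ge 1$ in one line, with no partition and no lemma. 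Your route buys more than brevity: the paper's step on $I_A$ is flawed, since from $\nicefrac{k_i}{N} \le \up_i$ it infers $N \le \nicefrac{k_i}{\up_i}$, but the implication goes the other way ($N \ge \nicefrac{k_i}{\up_i}$), and the correct conclusion on agreeing indices is $\up_i' \le \up_i$, not $\ge$. Indeed the paper's intermediate claim $\sum_i \up_i' > \sum_i \up_i$ is false in general: with shared $\ln = 1$, $\up_1 = \up_2 = 0.9$, $N = 10$, $k_1 = 10$, $k_2 = 0$, one gets $\sum_i \up_i' = \nicefrac{10.9}{11} + \nicefrac{0.9}{11} \approx 1.07 < 1.8$, although this sum is still $\ge 1$ as the theorem asserts. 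Your aggregation never compares posteriors to priors termwise, so it avoids this trap entirely; as a side benefit, it shows that with shared strengths the stronger bounds~\eqref{eq:agreement_constraint} on the sums in fact persist even under conflict.

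Your closing caveat about the common weight is substantive rather than cosmetic: the theorem is false if the strengths may differ across successors of a state-action pair. Take lower bounds $\lp_1 = 0.5$, $\lp_2 = 0.4$, upper bounds $\up_1 = \up_2 = 0.6$ (a valid prior), data $N = 10$, $k_1 = 0$, $k_2 = 10$ (a lower-bound conflict), and strengths $\ln_1 = 1000$, $\ln_2 = 1$; then $\sum_i \lp_i' = \nicefrac{500}{1010} + \nicefrac{10.4}{11} \approx 1.44 > 1$, violating~\eqref{eq:sufficient_constraint}, so no valid distribution survives the update. The paper's notation $[\ln_i, \un_i]$ suggests per-transition strengths, but the theorem needs exactly the hypothesis you isolate---one strength interval per state-action pair---which is what the update rule preserves (the same $N$ is added to every successor's strength) and what the experiments use. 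Making that hypothesis explicit, as you do, is a necessary correction to the statement, not merely a clarification; the only loose end in your write-up is that the feasibility fact for the box $\prod_i [\lp_i', \up_i']$ also needs $\lp_i' \le \up_i'$, which is supplied by Theorem~\ref{thm:closure_intervals}.
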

The proofs for both Theorem~\ref{thm:closure_intervals} and~\ref{thm:closure_distributions} can be found in %
Appendix~\ref{app:proofs}.

\subsection{Efficient exploration}
Above, we assumed that a set of trajectories was given. 
To actually obtain the trajectories, we use the well-established \emph{optimism in the face of uncertainty} principle~\citep{DBLP:journals/ftml/Munos14}.
We compute the optimal policy for the optimistic extension (see  Section~\ref{sec:preliminaries}) of the specification $\spec$, \ie $\spec_O$, in the current uMDP and use this policy for exploration.
To make exploration specification-driven, we only sample transitions along trajectories towards the target state(s) of the specification.
When the last seen state has probability zero or one to reach the target for reachability, or reward zero or infinity, we restart.
States satisfying these conditions can be found by analyzing the graph of the true MDP~\citep{DBLP:books/daglib/Baier2008}.

\paragraph{Trajectories and iterations. } 
As explained in Section~\ref{sec:problem}, our method is iterative in terms of updating the uMDP $\mathcal{M}$ and computing a robust policy. 
After updating the uMDP, we also compute a new exploration policy, based on the new uMDP.
Each iteration consists of processing at least one, but possibly more trajectories.
To determine how many trajectories to collect, we use a doubling-counting scheme, where we keep track of how often every state-action pair and transition is visited during exploration~\citep{DBLP:conf/icml/JinJLSY20}.
An iteration is completed when any of the counters is doubled with respect to the previous iteration.
A detailed description of this schedule is given in %
Appendix~\ref{app:sampling_schedule}.

\subsection{Learning under changing environments}\label{sec:changing}
Previously, we assumed a fixed unknown true MDP $M^*$ to learn.
But what if the transition probabilities of $M^*$ suddenly change?
More precisely, we assume two unknown true MDPs, $M_1^*$ and $M_2^*$ with the same underlying graph but (possibly) different transition probabilities.
After an unknown number of interactions with the true environment $M_1^*$, we suddenly continue interacting with $M_2^*$.
We modify our LUI approach by introducing a bound on the strength of the prior, $n_{\text{Max}} = [\ln_\text{Max}, \un_\text{Max}]$, and update the strength intervals by the following rule
\[
[\ln_i', \un_i'] = [\min(\ln_i + N, \ln_{\text{Max}}), \min(\un_i + N, \un_\text{Max})].
\]
The probability intervals themselves are updated in the same way as before in Definition~\ref{def:interval_update}.
By limiting the prior strength in this way, we trade some rate of convergence for adaptability.
The weaker the prior, the greater the effect of a prior-data conflict, and hence adaptability to new data.
When suddenly changing environments, new data will likely lead to prior-data conflicts, and thus a higher adaptability of the overall learning method.

We add randomization to the pure optimistic exploration policy. 
We introduce a hyperparameter $\xi \in [0,1]$, and follow with probability $\xi$ the action of the optimistic policy, and distribute the remaining $1-\xi$ uniformly over the other actions, yielding a memoryless randomized policy.

\section{Experimental Evaluation}\label{sec:experiments}
We implement our approach, with both linearly updating intervals (LUI) and PAC intervals (PAC), in Java on top of the verification tool \tool{PRISM}~\citep{DBLP:conf/cav/KwiatkowskaNP11}, together with a variant of value iteration to compute robust policies for uMDPs with convex uncertainties~\citep{DBLP:conf/cdc/WolffTM12}.\footnote{The implementation is available at \url{https://github.com/LAVA-LAB/luiaard}.}
We compare our method to point estimates derived via MAP-estimation (MAP) and with uMDPs learned by the UCRL2 reinforcement learning algorithm~\citep{DBLP:journals/jmlr/JakschOA10} (UCRL2).
We modify UCRL2 to make it more comparable to our setting.
In particular, we use optimistic policies for exploration, but robust policies to compute the performance, in contrast to the standard UCRL2 setting which only uses optimistic policies, see %
Appendix~\ref{app:comparison} for further details.

Without knowledge about the true MDP apart from Assumption~\ref{assumption:graph}, we have to define an appropriate prior interval for every transition.
We set $\varepsilon = \texttt{1e-4}$ as constant and define the prior uMDP with intervals $\utransfunc_i = [\varepsilon, 1-\varepsilon]$ and strength intervals $[\ln_i, \un_i]=[5,10]$ at every transition $\utransfunc(s,a,s_i)$, as in Figure~\ref{fig:procedure_intial}.
For MAP, we use a prior of $\param_i = 10$ for all $i$.
The same prior is used for the point estimates of both PAC and UCRL2, together with an error rate of $\pacerror = 0.01$.

\paragraph{Evaluation metrics.}
We consider two metrics to evaluate the four learning methods.
\begin{itemize}
    \item \emph{Performance.} How does the robust policy computed on the learned model perform on the true MDP?
    We evaluate the probability of satisfying the given specification $\mathbb{P}^M_\pi(\eventually \target)$ or expected reward $\textrm{R}^M_\pi(\eventually \target)$ of the robust policy $\pi$ computed after each update of the model.
    \item \emph{Performance estimation error.}
    How well do we \emph{expect} a robust policy to perform on the true MDP based on the performance on the intermediate learned uMDP?
    We compute the difference between the performance of the robust policy on the learned uMDP (the worst-case performance) and the performance on the true MDP.
    While values closer to zero are preferable, we do not accept methods with positive estimation errors, since this indicates their estimated performance is not a lower (conservative) bound on the actual performance of the policy.
    In particular, an estimation error above zero shows the policy is misleading in terms of predicting its performance.
\end{itemize}

We benchmark our method using several well-known environments: the Chain Problem~\citep{DBLP:conf/ewrl/Araya-LopezBTC11}, Aircraft Collision Avoidance~\citep{kochenderfer2015decision}, a slippery Grid World~\citep{DBLP:conf/uai/DermanMMM19}, a 99-armed Bandit~\citep{Lattimore2020}, and two version of a Betting Game~\citep{DBLP:journals/mmor/BauerleO11}.
For details on all these environments we refer to %
Appendix~\ref{appx:models}.
We highlight the Betting Game and Chain Problem environments here, as they will be used to explain some of the key observations we make from our experimental results.
\begin{itemize}
    \item \emph{Betting Game. }  The agent starts with $10$ coins and attempts to maximize the number of coins after $6$ bets.
    When a bet is won, the number of coins placed is doubled; when lost, the number of coins placed is removed.
    The agent may bet $0$, $1$, $2$, $5$, or $10$ coins.
    We consider two versions of the game, one which is favourable to the player, with a win probability of $0.8$, and one that is unfavourable with win probability $0.2$.
    After $6$ bets the player receives a reward equal to the number of coins left.
    The specification is to maximize the reward.
    \item \emph{Chain Problem. } We consider a chain of $30$ states. 
    There are three actions, one progresses with probability $0.95$ to the next state, and resets the model to the initial state with probability $0.05$.
    The second action does the same, but with reversed probabilities.
    The third action has probability $0.5$ for both cases.
    Every action gets a reward of $1$.
    As specification, we minimize the reward to reach the last state of the chain. %
\end{itemize}

\subsection*{Results}

\begin{figure*}[t]
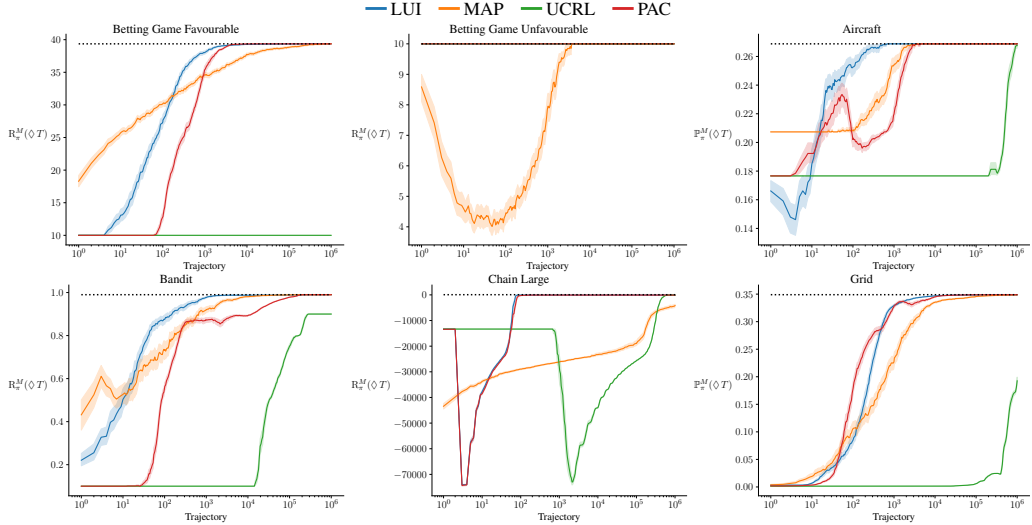

\centering
\includegraphics[height=8pt]{basic_AIRCRAFT_legend}\\
\includegraphics[width=0.32\textwidth]{basic_BETTING_GAME_FAVOURABLE_perf}
\includegraphics[width=0.32\textwidth]{basic_BETTING_GAME_UNFAVOURABLE_perf} 
\includegraphics[width=0.32\textwidth]{basic_AIRCRAFT_perf} \\
\includegraphics[width=0.32\textwidth]{basic_BANDIT_perf}
\includegraphics[width=0.32\textwidth]{basic_CHAIN_LARGE_perf}
\includegraphics[width=0.32\textwidth]{basic_GRID_perf}
\caption{Comparison of the performance of robust policies on different environments against the number of trajectories processed (on log-scale). The dashed line indicates the optimal performance.}
\label{fig:ex:learning_performance_1}
\end{figure*}

We present an excerpt of our experimental results here, and refer the reader to Appendix~\ref{app:additional_exp}%
for the full set of results, which in particular also includes the estimation error for all environments, an additional model error metric, and further experiments regarding different priors and changing environments.
All experiments were performed on a machine with a 4GHz Intel Core i9 CPU, using a single core.
Each experiment is repeated $100$ times, and reported with a $95\%$ confidence interval.

\begin{figure*}[t]
\centering
\includegraphics[width=0.32\textwidth]{basic_BETTING_GAME_FAVOURABLE_estimation_error} 
\includegraphics[width=0.32\textwidth]{basic_BETTING_GAME_UNFAVOURABLE_estimation_error}
\includegraphics[width=0.32\textwidth]{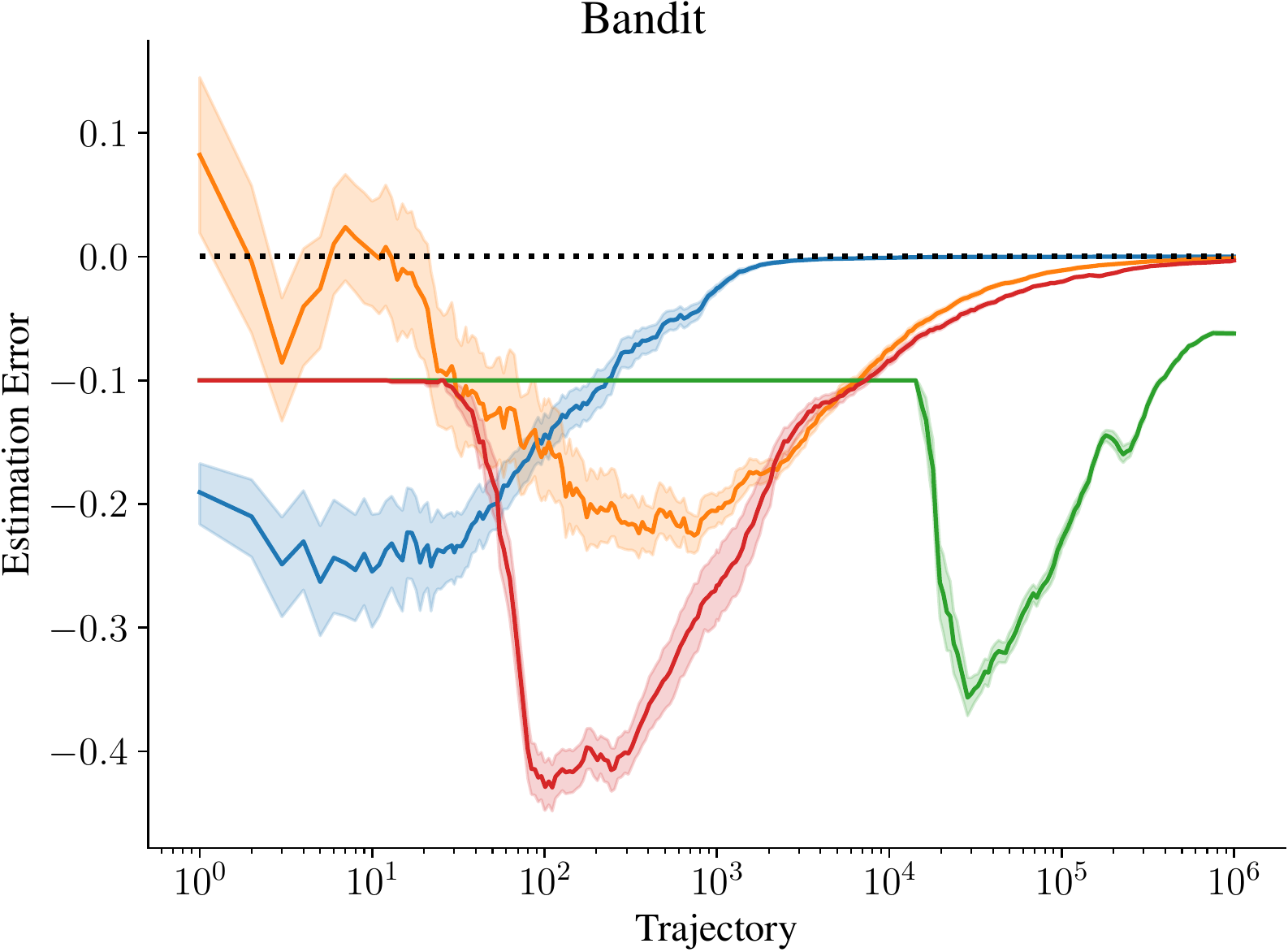}
\caption{Estimation error on the two Betting Game environments and the Bandit, against the number of trajectories processed (on log-scale).}
\label{fig:ex:estimation_errors}
\end{figure*}

Figure~\ref{fig:ex:learning_performance_1} shows the performance of the robust policies computed via each learning method against the number of trajectories processed on the different environments. 
We first note that in all environments, our LUI method is the first to find an optimal policy.
Depending on the environment, the performance of LUI and PAC may be roughly equivalent (Chain, Grid).
When also taking the estimation error into account (Figure~\ref{fig:ex:estimation_errors}), we see LUI outperforming other methods on that metric.
UCRL2 is the slowest to converge to an optimal policy.
This due to UCRL2 being a reinforcement learning algorithm, and thus it is slower in reducing the intervals in favour of a broader exploration.

\paragraph{Recovering from bad estimates.} On the Chain environment, we see LUI and PAC (around trajectory $5$), and UCRL2 (around trajectory $10^3$) choose the wrong action(s), with an decrease in performance as a result. 
This is most likely due to getting a bad estimate on some of the transitions later on in the chain, leading to many resets to the initial state, and thus decreasing the performance significantly.
While all three methods manage to recover and then find the optimal policy, UCRL2 takes significantly longer: only after trajectory $10^5$, 
where LUI and PAC only need about $100$ trajectories.
MAP-estimation typically sits between LUI and PAC in terms of performance. 
It is less sensitive to mistakes like the one discussed above, but is less reliable in providing a conservative bound on its performance, as will be discussed below.
Furthermore, we see that in the unfavourable Betting Game, only MAP-estimation gives sub-optimal performance, due to bad estimates.
It is able to recover from this, but needs almost $10^4$ trajectories to do so.
Due to the low win probability in this Betting Game, a robust policy on the uMDPs is by default an optimal policy for the true MDP, and we see that LUI, PAC, and UCRL2 do not change to a sub-optimal policy.

\paragraph{Robust policies are conservative. }
Consider Figure~\ref{fig:ex:estimation_errors}. 
We note the undesirable behaviour of having an Estimation Error above zero, which means the performance of the policy on the learned model was higher than the performance of that policy on the true MDP.
MAP-estimation is particularly susceptible to this, while all three uMDP methods yield policies that are conservative in general, though some exceptions exist as shown in the full results in %
Appendix~\ref{app:additional_exp}.

\paragraph{LUI is adaptive to changing environments. }
Finally, we investigate the behaviour of the learning methods when after a fixed number of trajectories the probabilities of the true MDP change, as introduced in Section~\ref{sec:changing}.
Figure~\ref{fig:ex:env_change} shows the performance of the robust policy for each learning method on the Chain environment, results for the same experiment on the Betting Game can be found in %
Appendix~\ref{app:additional_exp}.
After $\dag \in \{10^2, \dots, 10^5\}$ trajectories, we change the environment by swapping the transition probabilities, for three different bounds on the strength intervals. 
We similarly bound the MAP-estimation priors for MAP and UCRL.
PAC is omitted from this experiment as PAC guarantees lose all meaning when changing the underlying distribution.
After the change in environment, the new optimal policy has to use the opposite action from the previously optimal policy.
We see that LUI is the only method capable of converging to optimal policies both before and after the change in environment.
Furthermore, the lower the bounds on the prior strength, the faster it adapts to the change.
We conclude that LUI is an effective approach to deal with learning an MDP under potential adversarial sampling conditions.

\begin{figure*}[t]
\centering
\includegraphics[width=0.95\textwidth]{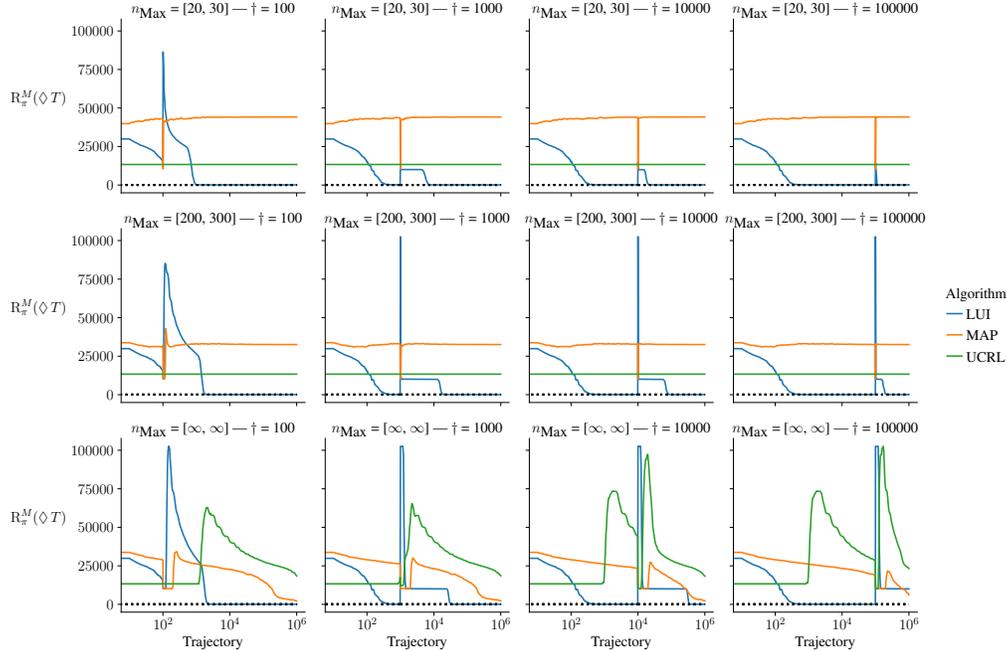}
\caption{Environment change on the Chain Problem at different points $\dag \in \{10^2, \dots, 10^5\}$ using randomization parameter $\xi = 0.8$ in the exploration.}
\label{fig:ex:env_change}
\end{figure*}

\section{Conclusion and Future Work}\label{sec:conclusion}
We presented a new Bayesian method that learns uMDPs to approximate an MDP, either via linearly updating intervals, or PAC-intervals.
Robust policies computed on learned uMDPs are shown to be conservative and reliable in predicting their performance when applied on the MDP that is being learned.
The approach of linearly updating intervals is also effective at continuously learning a potentially changing environment.
For future work, we aim to extend our method to uncertain POMDPs~\citep{DBLP:conf/ijcai/Suilen0CT20,DBLP:conf/aaai/Cubuktepe0JMST21}.
While we do not see any immediate negative societal impacts of our work, we acknowledge that potential misuse of our work cannot be ruled out due to the generality of MDPs.

\bibliography{bibliography}

\newpage
\appendix

\section{MAP-estimation}\label{app:map_estimation}
We describe a general setup for learning point estimates of probabilities via \emph{maximum a-posteriori estimation} (MAP-estimation).
For a fixed state-action pair $(s,a)$ with $m$ different successor states $s_1, \dots, s_m$ that is sampled $N = \#(s,a)$ times, and where each successor state is observed $k_i = \#(s,a,s_i)$ times for $i=1, \dots, m$, we have the \emph{multinomial likelihood}
\begin{align}
	Mn(k_1, \dots, k_n \mid P(s,a,\cdot)) = \frac{N!}{k_1! \cdot \ldots \cdot k_m!} \cdot \prod_{i = 1}^{m} \transfunc(s,a,s_i)^{k_i},%
\end{align}
where the transition probabilities $P(s,a,s_i)$ are unknown and given by a Dirichlet distribution with parameters $\param_1, \dots, \param_m$~\citep{DBLP:books/lib/Bishop07}:
\begin{align}
	Dir(P(s,a,\cdot) \mid \param_1, \dots, \param_m) \propto \prod_{i = 1}^{m} P(s,a,s_i)^{\param_i - 1}.
\end{align}
The Dirichlet distribution is a \emph{conjugate prior} to the multinomial likelihood, meaning that we do not need to compute the posterior distribution explicitly, but instead we just update the parameters of the prior Dirichlet distribution. Formally, conjugacy is a closure property, see~\citep{DBLP:journals/mscs/Jacobs20}.

Given a prior Dirichlet distribution with parameters $\param_1, \dots, \param_m$, and observing the $i$-th successor state $k_i$ times, the posterior Dirichlet distribution is given by simply adding $k_i$ to parameter $\param_i$:
\begin{align}
Dir(P(s,a,\cdot) \mid \param_1 + k_1, \dots, \param_m + k_m).
\end{align}
\noindent
Having computed the posterior Dirichlet distribution, MAP-estimation can be used to infer the probabilities. 
These estimates are given by the \emph{mode} of each parameter of the posterior distribution:
\begin{align}
	\tilde{P}(s,a,s_i) = \frac{\param_i - 1}{(\sum_{j =1}^{m} \param_j) - m}.
\end{align}
When all parameters $\param_i$ are equal, MAP-estimation yields uniform distributions.

\section{Proofs}\label{app:proofs}
The proof of Theorem~\ref{thm:closure_intervals} is straightforward and relies on the observation that for any amount of finite data, a single update can only grow closer to zero but not become zero, and that iterated updates converge to the true probability which is assumed to be non-zero.
\begin{proof}[Proof Theorem~\ref{thm:closure_intervals}]
	For any transition $(s,a,s_i)$ the following holds.
	We assume a valid prior, that is, $ 0 < \lp_i \leq \up_i \leq 1$.
	We have an empirical estimate of $\nicefrac{k_i}{N}$.
	Then we also have $0 < \lp_i' \leq \up_i' \leq 1$, where the first inequality $0 < \lp_i'$ follows from the fact that  Equation~\ref{eq:interval_lowerbound} can only be $0$ when their nominator is zero, which is not possible when $\ln_i \geq 1$ (or $\un_i \geq 1$) and $\lp_i > 0$.
	The second inequality, $\lp_i' \leq \up_i'$ follows directly from Equations~\ref{eq:interval_lowerbound} and~\ref{eq:interval_upperbound} together with $\lp_i \leq \up_i$.
	The third inequality $\up_i' \leq 1$ follows again from Equation~\ref{eq:interval_upperbound} when $\up_i <\leq 1$ and $k_i \leq N$. 
	All reasoning above is independent of prior-data agreement or conflict and thus applies to both cases in each of the equations.
\end{proof}

The proof of Theorem~\ref{thm:closure_distributions} uses the following Lemma:
\begin{lemma}\label{lemma:number_of_conflicts}
At a state-action pair with $m$ successor states, there can be at most $m-1$ prior-data conflicts on the upper (or lower) bound.
\end{lemma}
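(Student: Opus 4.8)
The plan is to prove both halves of the lemma by the same pigeonhole-style contradiction, exploiting that the empirical frequencies at the pair already form a distribution. First I would record the two ingredients. Since every visit to $(s,a)$ produces exactly one successor, the counts satisfy $\sum_{i=1}^m k_i = N$, so $\sum_{i=1}^m \nicefrac{k_i}{N} = 1$. Second, I would invoke the standing validity assumption that the prior intervals at the pair contain at least one distribution, i.e.\ $\sum_i \lp_i \leq 1 \leq \sum_i \up_i$ — the very condition that is preserved by Theorem~\ref{thm:closure_distributions}.

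For the lower bound, index $i$ is in conflict exactly when $\nicefrac{k_i}{N} < \lp_i$ (the conflict branch of Equation~\eqref{eq:interval_lowerbound}). Assume for contradiction that all $m$ indices conflict. Summing the strict inequalities and using the two ingredients gives
\begin{align*}
1 = \sum_{i=1}^m \frac{k_i}{N} < \sum_{i=1}^m \lp_i \leq 1,
\end{align*}
so $1 < 1$, a contradiction; hence at most $m-1$ lower-bound conflicts occur. The upper-bound case is symmetric: index $i$ conflicts when $\nicefrac{k_i}{N} > \up_i$, and assuming all $m$ conflict yields
\begin{align*}
1 = \sum_{i=1}^m \frac{k_i}{N} > \sum_{i=1}^m \up_i \geq 1,
\end{align*}
which is again impossible.

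I expect no genuine obstacle here; the argument is essentially a one-line normalization count. The only points needing care are matching the direction of each strict inequality to the conflict condition in Definition~\ref{def:interval_update}, and being explicit that it is the \emph{validity} of the prior (containment of a distribution), rather than merely $\lp_i \leq \up_i$ at each individual transition, that closes off the inequality chains at $1$. I would therefore state that assumption up front, so that both displays terminate cleanly with $\leq 1$ and $\geq 1$ respectively.
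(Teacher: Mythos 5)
Your proof is correct and uses essentially the same argument as the paper: summing the strict conflict inequalities over all $m$ successors and contradicting the normalization $\sum_i \nicefrac{k_i}{N} = 1$ against the validity of the prior ($\sum_i \lp_i \leq 1 \leq \sum_i \up_i$); the paper proves the upper-bound case and appeals to symmetry for the lower, whereas you write out both. The only difference is that the paper additionally exhibits a two-successor example witnessing that $m-1$ conflicts can actually occur, a tightness observation not strictly required by the ``at most'' statement.
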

\begin{proof}
We prove the lemma for the upper bound, a proof for the lower bound follows by symmetry.
Assume a valid prior, that is, $\sum_i \up_i \geq 1$.
Suppose there is a prior-data conflict for every interval, \ie, $\forall i. \frac{k_i}{N} > \up_i$.
Then we also have
\[
 1 = \sum_i \frac{k_i}{N} > \sum_i \up_i \geq 1,
\]
which is clearly not possible.
The possibility for $m-1$ prior-data conflicts is witnessed in the following example.
Take a state-action pair with two successor states, $s_1$ and $s_2$. 
Then $m=2$.
Take one sample, \ie, $N=1$, and suppose we observe $s_1$, such that $k_1 = 1$ and $k_2 = 0$.
Then for any valid prior intervals $I_1 =  [\lp_1, \up_1]$ and $I_2 = [\lp_2, \up_2]$ we have $\nicefrac{k_1}{N} = 1 > \up_1$ and $\nicefrac{k_2}{N} = 0 < \up_2$. 
Hence, a prior-data conflict at $I_1$ but not at $I_2$, thus $m-1$ conflicts at the state-action pair in total.
\end{proof}

We additionally have the following Lemma:
\begin{lemma}
A prior-data conflict at the upper bound implies a prior-data agreement at the lower bound, and vice versa.
\end{lemma}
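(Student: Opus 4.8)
The plan is to unpack the per-transition definitions of ``conflict'' and ``agreement'' read off from Definition~\ref{def:interval_update}, and to observe that the claim collapses entirely onto the interval-validity hypothesis $\lp_i \leq \up_i$. Fix a successor index $i$ and write the empirical frequency as $\nicefrac{k_i}{N}$. From the case splits in Equations~\eqref{eq:interval_lowerbound} and~\eqref{eq:interval_upperbound}, a conflict at the upper bound is the condition $\nicefrac{k_i}{N} > \up_i$, an agreement at the lower bound is $\nicefrac{k_i}{N} \geq \lp_i$, a conflict at the lower bound is $\nicefrac{k_i}{N} < \lp_i$, and an agreement at the upper bound is $\nicefrac{k_i}{N} \leq \up_i$. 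This is the same per-index reading of the conditions that is already used in the proof of Lemma~\ref{lemma:number_of_conflicts}.

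For the forward direction I would assume a conflict at the upper bound, $\nicefrac{k_i}{N} > \up_i$, and chain it with validity $\up_i \geq \lp_i$ to obtain $\nicefrac{k_i}{N} > \up_i \geq \lp_i$, which gives $\nicefrac{k_i}{N} \geq \lp_i$, \ie agreement at the lower bound. The converse is entirely symmetric: a conflict at the lower bound gives $\nicefrac{k_i}{N} < \lp_i \leq \up_i$, hence $\nicefrac{k_i}{N} \leq \up_i$, \ie agreement at the upper bound. The one-line intuition is that $\nicefrac{k_i}{N}$ is a single real number and the interval $[\lp_i, \up_i]$ is non-empty, so the estimate cannot simultaneously lie strictly above $\up_i$ and strictly below $\lp_i$; exceeding the upper end therefore automatically satisfies the (weaker) lower-end agreement condition, and dually.

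I expect the main obstacle to be conceptual bookkeeping rather than calculation: one must interpret ``conflict'' and ``agreement'' at the level of the individual index $i$ and not as the global quantified predicates over all successors $j$, since under a global reading a single upper-bound conflict at one successor need not force lower-bound agreement at the others and the statement would fail. Once the per-index conditions are pinned down, no case analysis on the strength parameters $\ln_i, \un_i$ is required, and the lemma follows immediately from $\lp_i \leq \up_i$.
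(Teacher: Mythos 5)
Your proof is correct and follows essentially the same route as the paper's: the paper also chains the conflict inequality with interval validity, writing $\nicefrac{k_i}{N} > \up_i \geq \lp_i$ to conclude agreement at the lower bound, and dispatches the converse by symmetry. Your additional remark that ``conflict'' and ``agreement'' must be read per-index (not as the globally quantified conditions from Definition~\ref{def:interval_update}) is a reasonable clarification but does not change the substance of the argument.
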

\begin{proof}
Assume a conflict at the upper bound $\up_i$. 
Then $\frac{k_i}{N} > \up_i \geq \lp_i$, which is a prior-data agreement with $\lp_i$ by definition.
The other way around follows by symmetry.
\end{proof}

Now we prove Theorem~\ref{thm:closure_distributions}.
\begin{proof}[Proof Theorem~\ref{thm:closure_distributions}]
We start with the constraints in Equation~\ref{eq:agreement_constraint}.

First, $1 \leq \sum_i \up_i'$.
We use that there is a prior-data agreement, that is, $\forall i. \frac{k_i}{N} \leq \up_i$.
Then we derive
\begin{align*}
\sum_i \up_i' = \sum_{i} \frac{\un_i\up_i + k_i}{\un_i+N} 
 \quad \geq \quad &\sum_i \frac{\un_i \frac{k_i}{N} + k_i}{\un_i + N} 
 = \sum_i \frac{\frac{\un_i k_i + k_i N}{N}}{\un_i + N} \\
& \qquad= \sum_i \frac{\frac{k_i(\un_i + N)}{N}}{\un_i + N} 
 = \sum_i \frac{k_i}{N} = \frac{1}{N} \sum_i k_i = \frac{N}{N} = 1.
\end{align*}
The bound $\sum_i \up_i' \leq \sum_i \up_i$ is derived using that
\[
\forall i. \frac{k_i}{N} \leq \up_i \iff \forall i. k_i \leq \up_i N.
\]
Then, it follows that
\begin{align*}
    \sum_i \up_i' &= \sum_{i} \frac{\un_i\up_i + k_i}{\un_i+N} 
    \quad \leq \quad  \sum_i \frac{\un_i \up_i + \up_i N}{\un_i + N} 
     = \sum_i \frac{\up_i (\un_i + N)}{\un_i + N} = \sum_i \up_i.
\end{align*}

The proof for the bounds
\[
\sum_i \lp_i \leq \sum_i \lp_i' \leq 1
\]
is symmetrical to the one above.

Next, we consider the case for a prior-data conflict, that is, the bounds from Equation~\ref{eq:sufficient_constraint}.
The existential condition $\exists j. \frac{k_j}{N} \geq \up_j$ does not have to be unique, hence we make a case distinction on the indexes for which the existential quantification holds and for which it does not.
Let $I = \{1, \dots, m\}$ be the set of indices that enumerates the $m$ successor states at the state-action pair we consider.
By Lemma~\ref{lemma:number_of_conflicts} we know that there are at most $m-1$ prior-data conflicts. 
Hence, we can partition $I$ into two non-empty subsets, $I_A$ containing all indices where the point estimate agrees with the prior interval, and $I_C$ the set of indices where there is a prior-data conflict.
That is,
\begin{align*}
    I_A = \{i \in I \mid \frac{k_i}{N} \leq \up_i\}, \qquad I_C = \{i \in I \mid \frac{k_i}{N} > \up_i\}.
\end{align*}
We use this partition to split the sum over all indices in two:
\begin{align*}
    \sum_i \up_i' & = \sum_{i \in I} \frac{\ln_i \up_i + k_i}{\ln_i + N} 
     =\sum_{i \in I_A} \frac{\ln_i \up_i + k_i}{\ln_i + N} + \sum_{i \in I_C} \frac{\ln_i \up_i + k_i}{\ln_i + N}
\end{align*}
We now reason on each part separately.

For $i \in I_A$ we have $\frac{k_i}{N} \leq \up_i$, which also means $N \leq \frac{k_i}{\up_i}$.
\begin{align*}
    \sum_{i \in I_A} \frac{\ln_i \up_i + k_i}{\ln_i + N} \quad\geq\quad \sum_{i \in I_A} \frac{\ln_i \up_i + k_i}{\left(\ln_i + \frac{k_i}{\up_i}\right)} 
     = \sum_{i \in I_A} \frac{\ln_i \up_i + k_i}{\left(\frac{\ln_i \up_i + k_i}{\up_i}\right)} = \sum_{i \in I_A} \up_i \left( \frac{\ln_i \up_i + k_i}{\ln_i \up_i + k_i} \right)  = \sum_{i \in I_A} \up_i.
\end{align*}
Next, for $i \in I_C$ we have $\frac{k_i}{N} > \up_i$, and thus also $k_i > \up_i N$.
Then we have the following:
\begin{align*}
    \sum_{i \in I_C} \frac{\ln_i \up_i + k_i}{\ln_i + N} \quad>\quad \sum_{i \in I_C} \frac{\ln_i \up_i + \up_i N}{\ln_i + N} 
      =  \sum_{i \in I_C} \up_i \frac{\ln_i + N}{\ln_i + N} = \sum_{i \in I_C} \up_i.
\end{align*}
Finally, we put the two partitions back together using the inequalities derived on both, and conclude by using the assumption that the prior is valid, \ie, $\sum_i \up_i \geq 1$:
\begin{align*}
   \sum_{i} \up_i' =  \sum_{i \in I_A} \frac{\ln_i \up_i + k_i}{\ln_i + N} + \sum_{i \in I_C} \frac{\ln_i \up_i + k_i}{\ln_i + N} 
     \quad > \quad \sum_{i \in I_A} \up_i + \sum_{i \in I_C} \up_i 
      = \sum_{i \in I} \up_i \geq 1.
\end{align*}

The proof for the lower bounds, that $\sum_i \lp_i' \leq 1$, follows the same reasoning by symmetry.

\end{proof}

\section{Exploration Scheme}
\label{app:sampling_schedule}

We detail the exploration scheme in Algorithm~\ref{alg:sampling} below.

\begin{algorithm}\caption{Exploration scheme.}\label{alg:sampling}
\begin{algorithmic}[1]
  \Require{$K:$ number of trajectories.}
  \Require{$H:$ max trajectory length.}
  \Require{$M \mdp: $ underlying MDP.}
  \Require{$\spec: $ specification.}
	\Require{$\mathcal{A}$: algorithm for exploration and robust verification.}
	\For{$s,a \in S \times A$}
    \LineComment{Initialize counters}
    \State{$\#(s,a) = 0$}
  	\State{$\#(s,a,s') = 0: \forall s' \in S$}
  	\State{$\#_i(s,a) = 0$}
  	\State{$\#_i(s,a,s') = 0: \forall s' \in S$}
	\EndFor{}
  \For{$k \in [1,\cdots,K]$}
    \If{$\exists s,a \in S \times A: \#_i(s,a) >= \#(s,a) $}
      \LineComment{Compute new policies.}
      \State{Give iteration counters $\#_i(s,a)$ and $\#_i(s,a,s')$ to $\mathcal{A}$}
      \State{Get sampling policy from $\mathcal{A}: \strat_{\textrm{sampling}}$}
      \State{Get robust policy from $\mathcal{A}: \strat_{\textrm{robust}}$}
      \State{Evaluate $\strat_{\textrm{robust}}$ on $M$ according to $\spec$}
      \LineComment{Update global counters}
      \State $\#(s,a) \pluseq{} \#_i(s,a): \forall s,a \in S \times A$ 
      \State{$\#(s,a,s') \pluseq{}  \#_i(s,a,s') : \forall s,a \in S\times A \times S$}
      \LineComment{Reset iteration counters}
      \State{$\#_i(s,a) = 0: \forall s,a \in S \times A$}
      \State{$\#_i(s,a,s') = 0: \forall s,a \in S\times A \times S$}
    \EndIf{}
    \State{Sample $\traj$ from $M$ following $\strat_{\textrm{sampling}}$ with max size $H$}
		\State{Increment $\#_i(s,a)$ and $\#_i(s,a,s')$ according to $\traj$.}
	\EndFor{}
\end{algorithmic}

\end{algorithm}

\section{Detailed Problem Domain Descriptions}\label{appx:models}
We give a detailed description of each model we use in the experimental evaluation below, together with the specification and (if applicable) its source.

\paragraph{Chain problem.} We consider a larger version of the chain problem~\cite{DBLP:conf/ewrl/Araya-LopezBTC11} with $30$-states. 
There are three actions, one progresses with probability $0.95$ to the next state, and resets the model to the initial state with probability $0.05$.
The second action does the same, but with reversed probabilities.
The third action has probability $0.5$ for both progressing and resetting.
Every action gets a reward of $1$.
As specification, we minimize the reward to reach the last state of the chain: $\Rmin(\eventually T)$.
This can be seen as an instance of the stochastic shortest path problem~\citep{DBLP:books/lib/Bertsekas05}.

\paragraph{Aircraft collision avoidance.} We model a small, simplified instance of the aircraft collision avoidance problem~\cite{kochenderfer2015decision}.
Two aircraft, one controlled, one adversarial, approach each other.
The controlled aircraft can increase, decrease, or stay at the current altitude with a success probability of $0.8$.
The adversarial aircraft can do the same, but does so with probabilities $0.3$, $0.3$, $0.4$ respectively.
To goal is to maximize the probability of the two aircraft passing each other without a collision, expressed by the temporal logic specification $\Pmax(\neg \text{collision} \until \text{passed})$.

\paragraph{Grid world.} We consider a slippery $10 \times 10$ grid world as in~\cite{DBLP:conf/uai/DermanMMM19}, where a robot starts in the north-west corner and has to navigate towards a target position.
The robot can move in each of the four cardinal directions with a success probability of $0.55$, and a probability of $0.15$ to move in each of the other directions.
Throughout the grid, the robot has to avoid traps.
The model has $100$ states and $1450$ transitions.
We consider \emph{safety} specifications that maximize the probability for reaching the north-east (NE) corner without getting trapped, \ie $\Pmax(\neg \text{trapped} \until \text{NE})$.

\paragraph{Bandit.} 
We consider a $99$-armed bandit~\cite{Lattimore2020}, where each arm (action) has an increased probability of success, $0.01, 0.02, \dots, 0.99$ respectively. The goal is to find the action that has the highest probability of success: $\Pmax(\eventually \text{success})$.

\paragraph{Betting game. } 
We consider a betting game from~\citep{DBLP:journals/mmor/BauerleO11}, in which an agent starts with $10$ coins and attempts to maximize the number of coins after $6$ bets.
When a bet is won, the amount of coins placed is doubled, when lost the amount of coins placed is removed.
The agent may place bets of $0$, $1$, $2$, $5$, and $10$ coins.
We consider two versions of the game, one which is FAVOURABLE to the player, with a win probability of $0.8$, and one that is unFAVOURABLE to the player, with a win probability of $0.2$.
After six bets the player receives a state-based reward that equals the number of coins left.
This yields an MDP of $300$ states and $1502$ transitions.
The specification is to maximize the reward after for reaching a terminal state after these six bets.

\section{Comparison with UCRL2. }\label{app:comparison}
UCRL2~\citep{DBLP:journals/jmlr/JakschOA10} is a reinforcement learning algorithm, meaning it is designed to collect reward while exploring, in contrast to our setting where we have a clear separation between exploration and policy computation (\ie computing the reward).
We use UCRL2 to learn a uMDP, but then compute a robust policy on that learned model, instead of an optimistic policy as is standard for UCRL2.
Furthermore, we make the following changes to the UCRL2 algorithm to make it more comparable to our setting:
\begin{enumerate}
    \item Just as with the PAC intervals, we use MAP-estimation instead of the frequentist approach to estimate the point estimates.
    \item UCRL2 does not compute individual intervals, but a set of distributions around the distribution of point estimates.
    In particular, it takes the set of distributions for which the $\ell^1$ norm to the estimated distribution is less than or equal to
    \begin{align}
        \sqrt{\frac{14 |S| \log(2 |A| t_k \cdot \nicefrac{1}{\gamma})}{\max(1,N)}},
    \end{align}
    see~\citep{DBLP:journals/jmlr/JakschOA10} for details on this bound.
    We use this bound for the individual intervals (bounded by $\varepsilon$ to ensure lower bounds greater than zero). 
    As the robust value iteration for uMDPs restricts to valid distributions with the product of these intervals, we do not use the $\ell^1$ norm, but put in the intervals derived from MAP-estimation with this bound.
    \item Finally, the UCRL2 algorithm assumes that the reward function is also unknown, and attempts to learn that as well.
    We assume the reward function to be known in our setting.
\end{enumerate}

\section{Complete Experimental Evaluation}\label{app:additional_exp}
In this Section we present the full experimental evaluation of our method.
For our main conclusions, we refer back to Section~\ref{sec:experiments}, and only comment here on additional experiments not found in the main text.

In Figure~\ref{fig:appx:ex:1} we show the Performance, and Estimation Error for all six environments and all four learning methods.
Additionally, we also define a \emph{model error}:
For each transition, we compute the maximum distance between the true probability and the lower and upper bounds of the interval in the uMDP (or the point estimate for MAP-estimation), and then take the average over all these distances. 

Figure~\ref{fig:strength_comparison} shows the performance of LUI on the Grid environment with different prior strength choices.
We note little difference, and thus conclude that our LUI approach is not very sensitive to small changes in prior choices.

Figures~\ref{fig:appx:ex:switch_alg_chain08} and~\ref{fig:appx:ex:switch_alg_chain10} shows the results for switching the probabilities on the Chain environment, for two different randomization parameters in the exploration: $\xi = 0.8$ and $\xi = 1.0$ (no randomization).
Figures~\ref{fig:appx:ex:switch_alg_bet08} and~\ref{fig:appx:ex:switch_alg_bet10} show the same experiment on the Betting Game, switching from a favourable game to an unfavourable one, again with randomization parameters $\xi = 0.8$ and $\xi = 1.0$.
We note the need for randomization during exploration, especially in the Chain environment.

 \begin{figure*}[b]
     \centering
     \includegraphics[width=.7\textwidth]{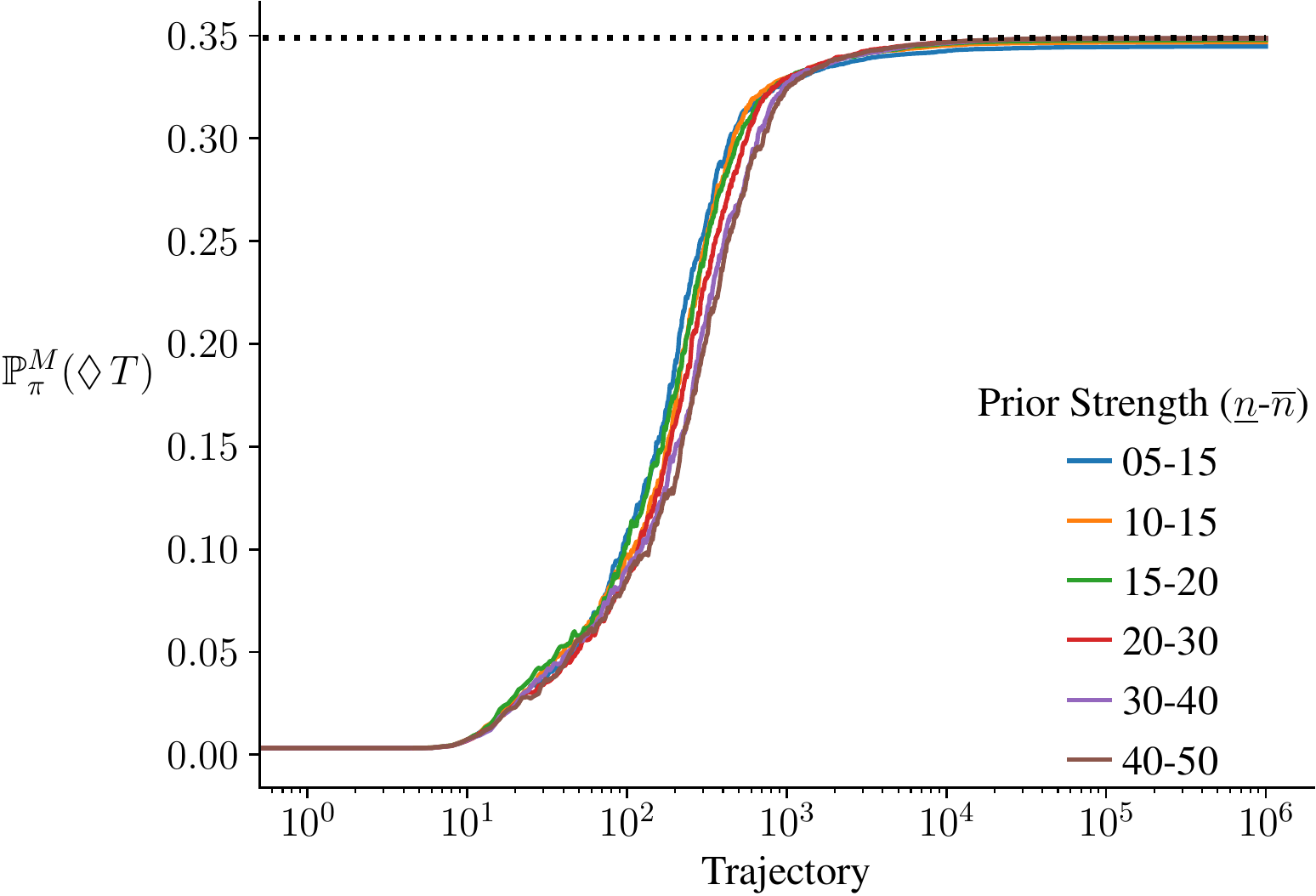}
     \caption{
        Prior strength evaluation on the Grid environment.
     }
     \label{fig:strength_comparison}
 \end{figure*}

\begin{figure*}[t]
    \centering
    \includegraphics{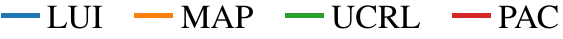} \\
    \includegraphics[width=0.32\textwidth]{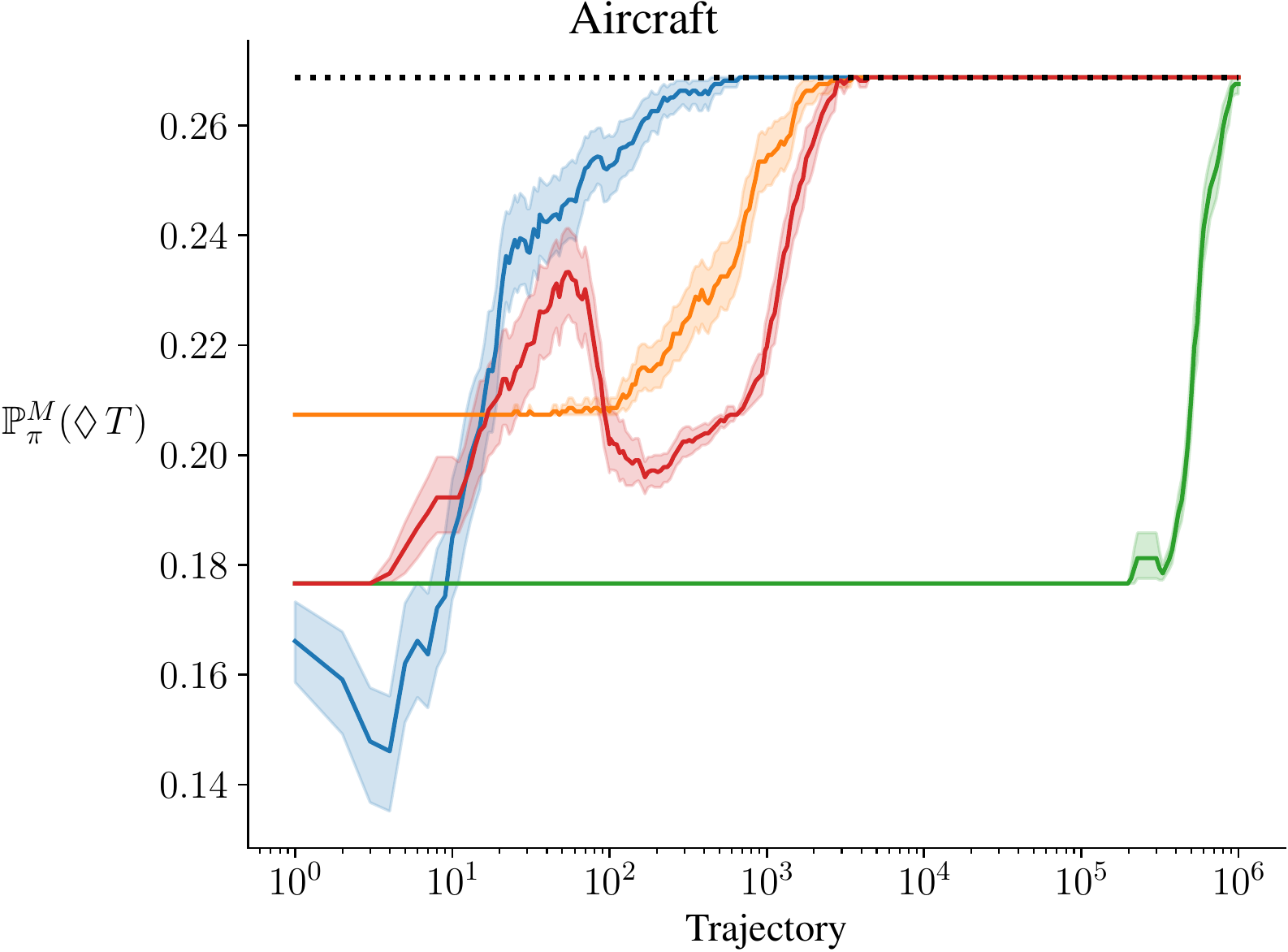}
    \includegraphics[width=0.32\textwidth]{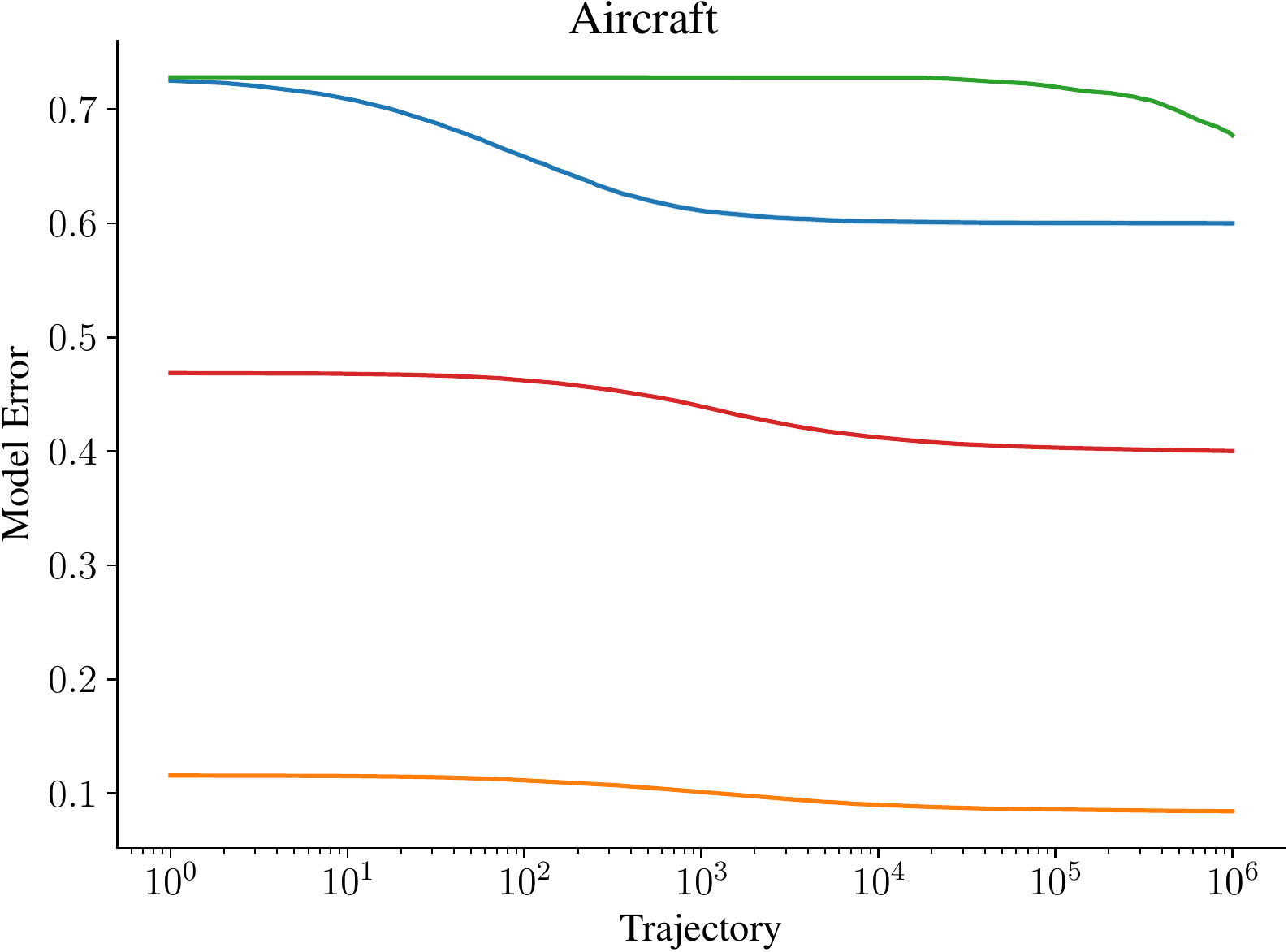}
    \includegraphics[width=0.32\textwidth]{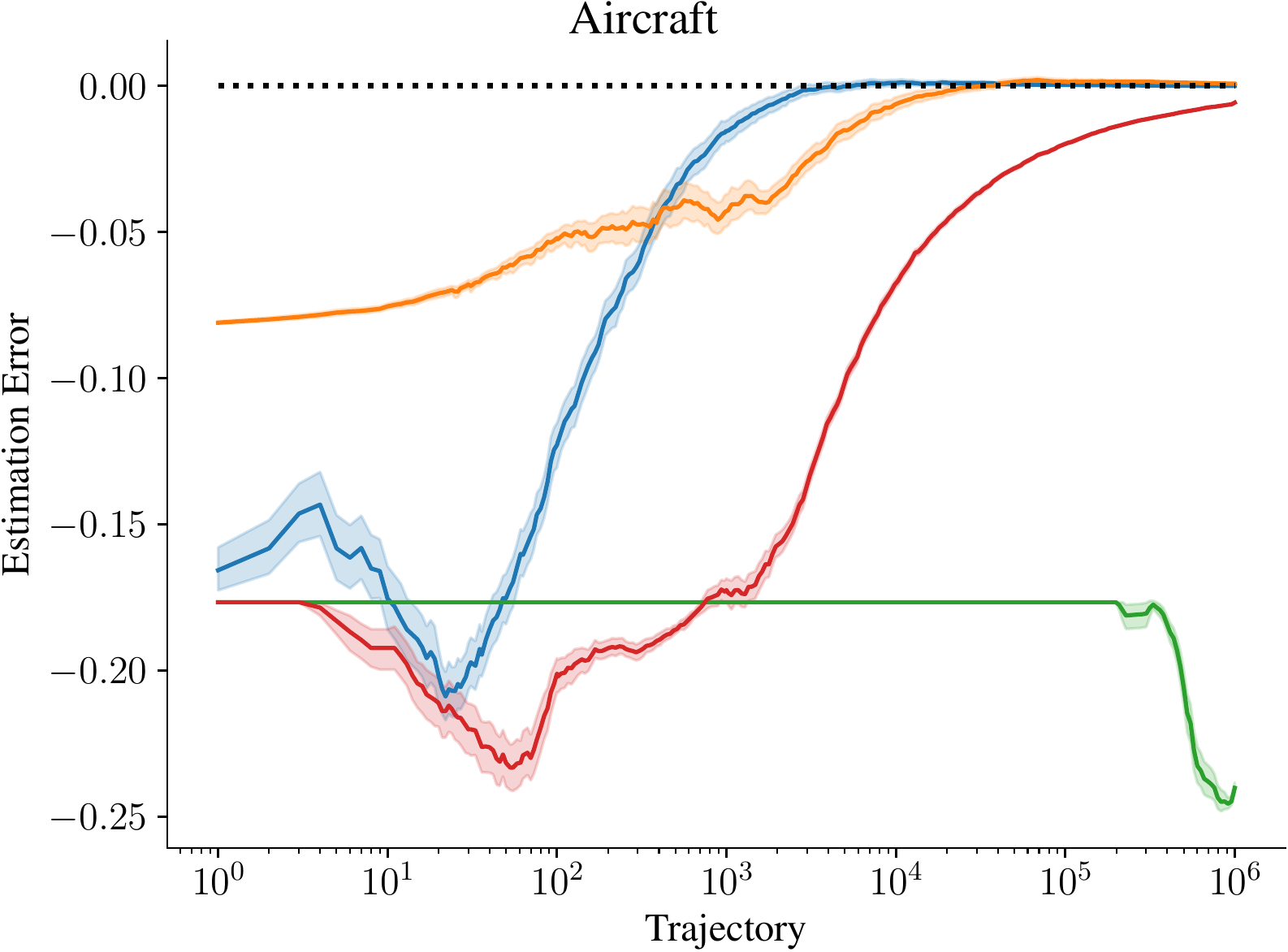}

    \includegraphics[width=0.32\textwidth]{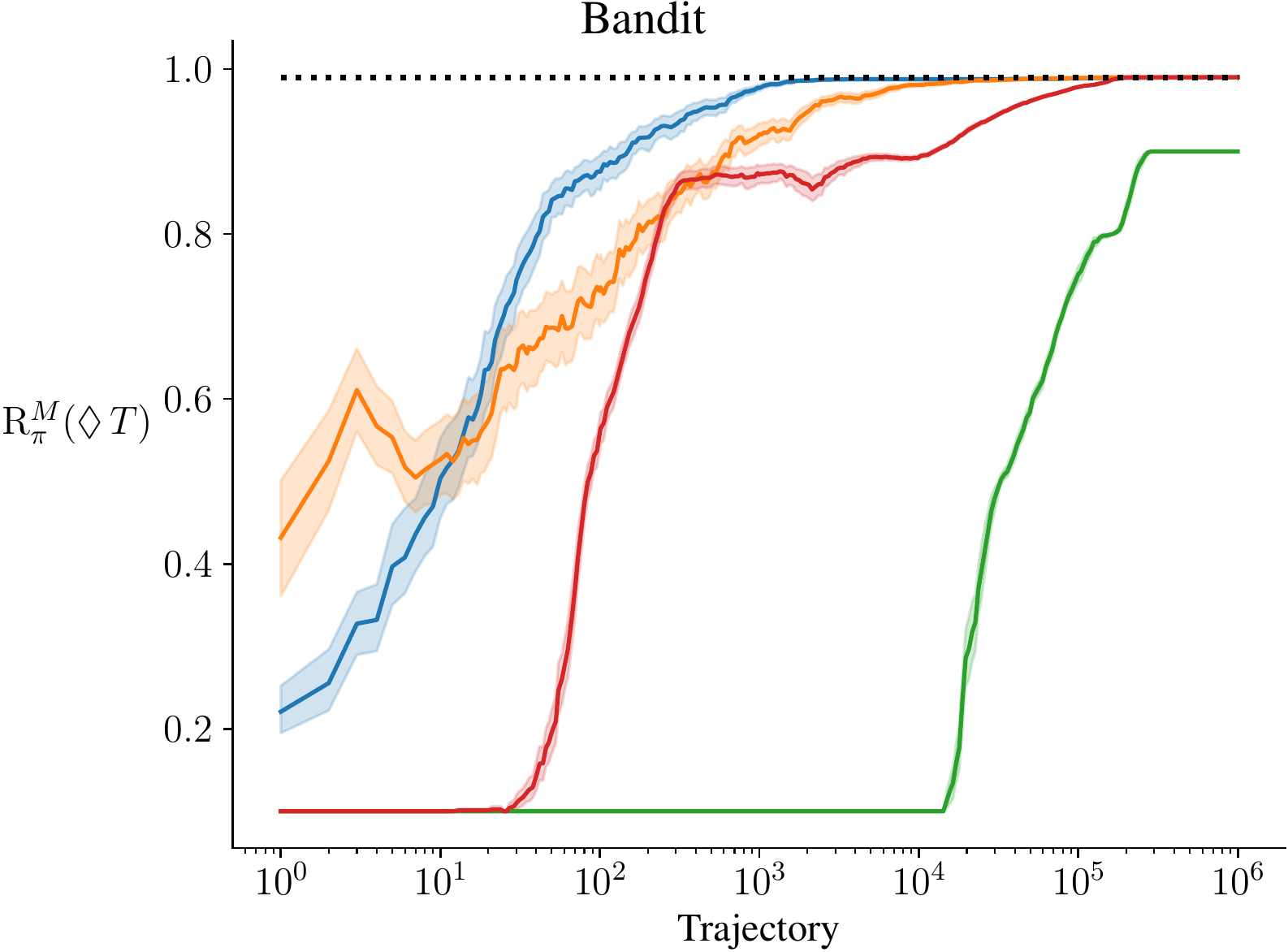}
    \includegraphics[width=0.32\textwidth]{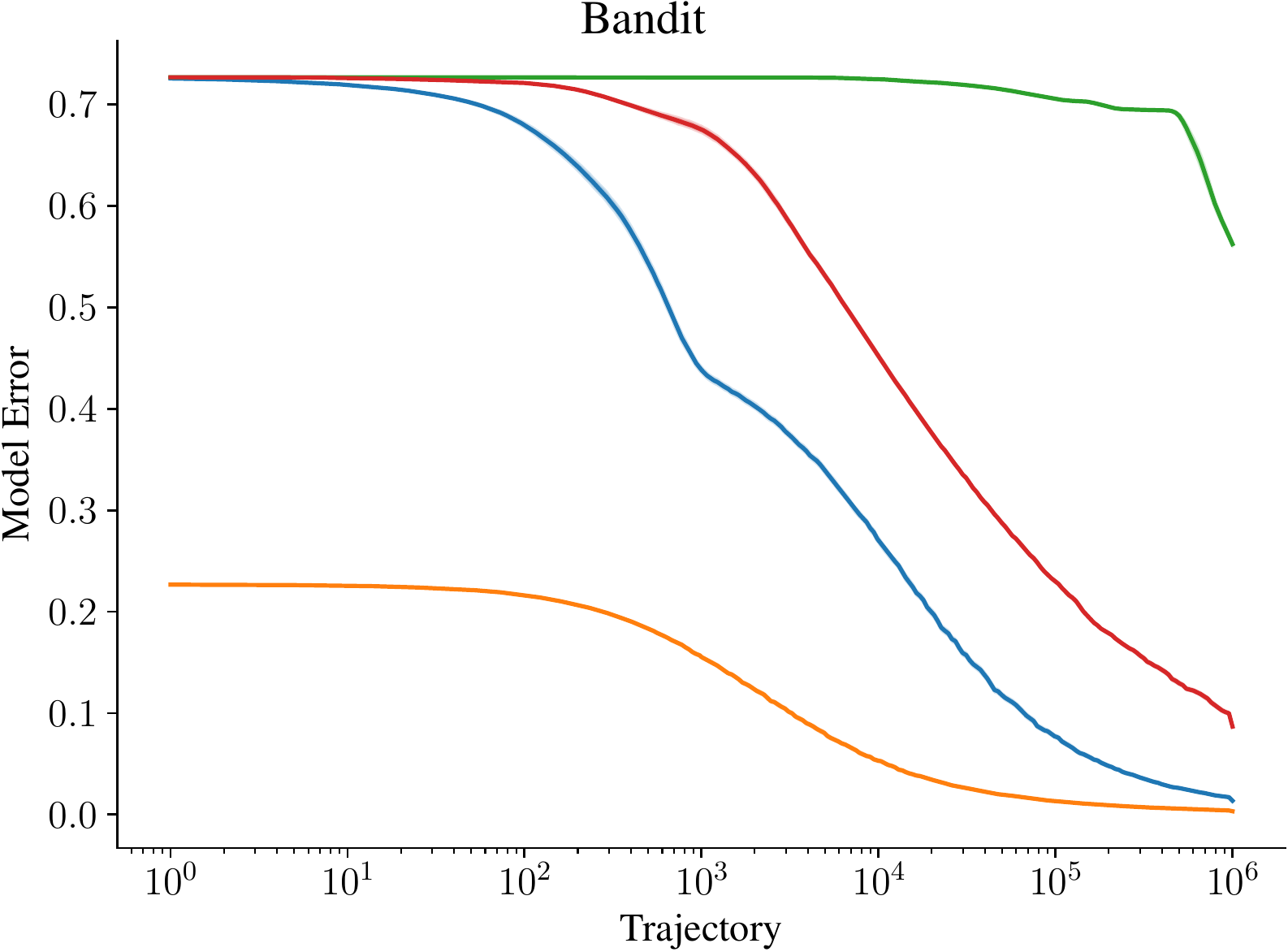}
    \includegraphics[width=0.32\textwidth]{plots/basic_BANDIT_estimation_error.pdf}\\
    \includegraphics[width=0.32\textwidth]{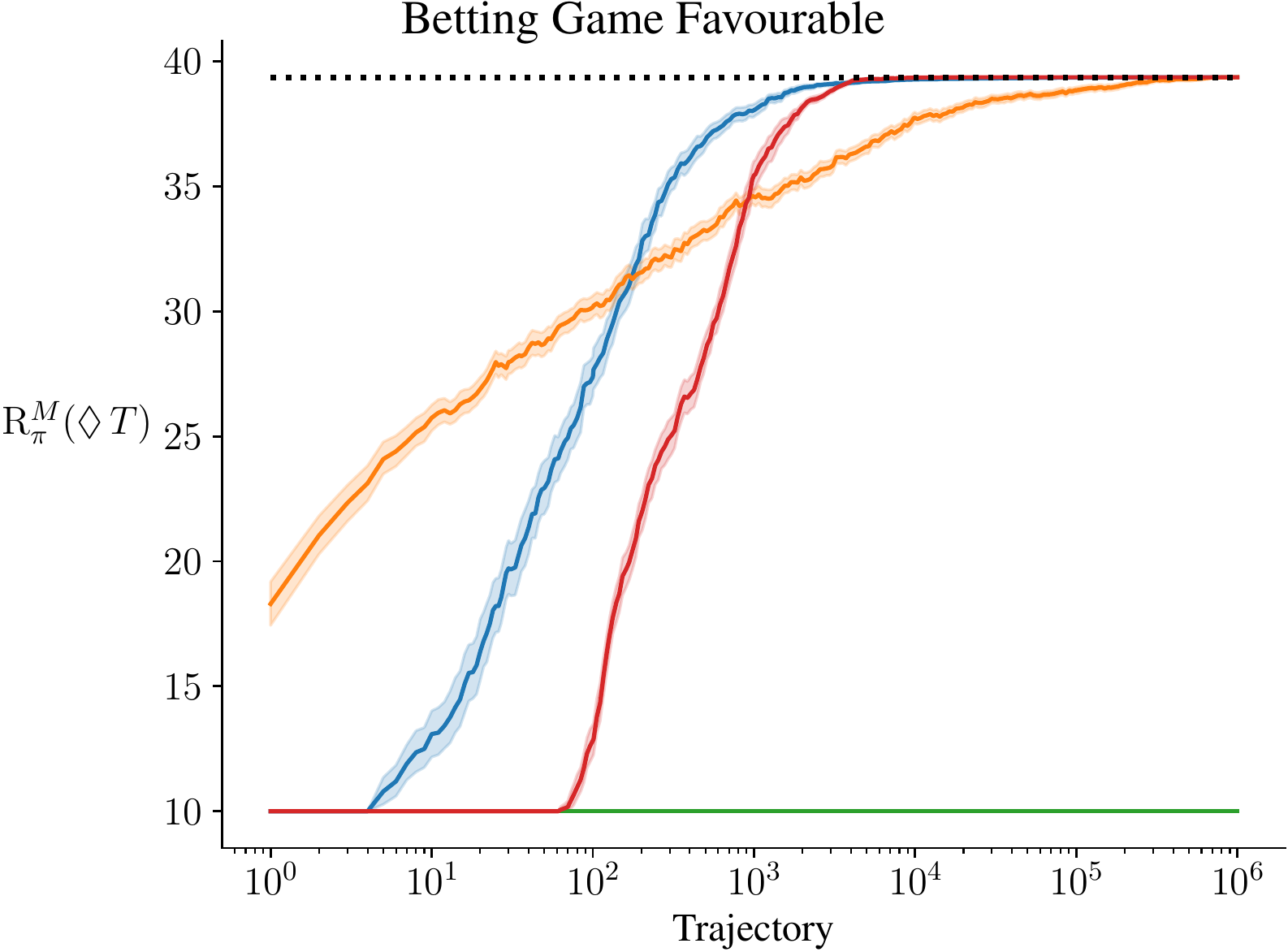}
    \includegraphics[width=0.32\textwidth]{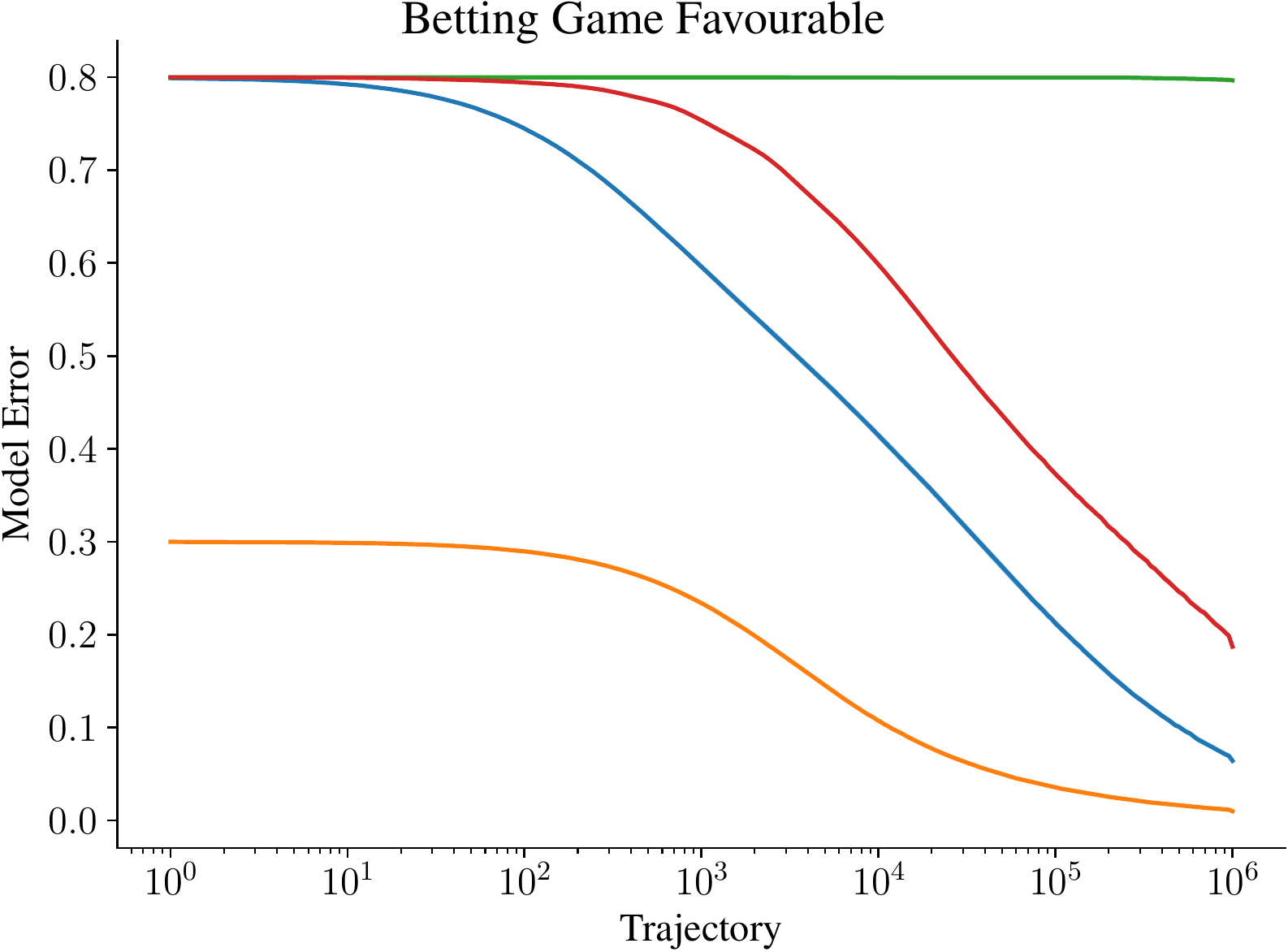}
    \includegraphics[width=0.32\textwidth]{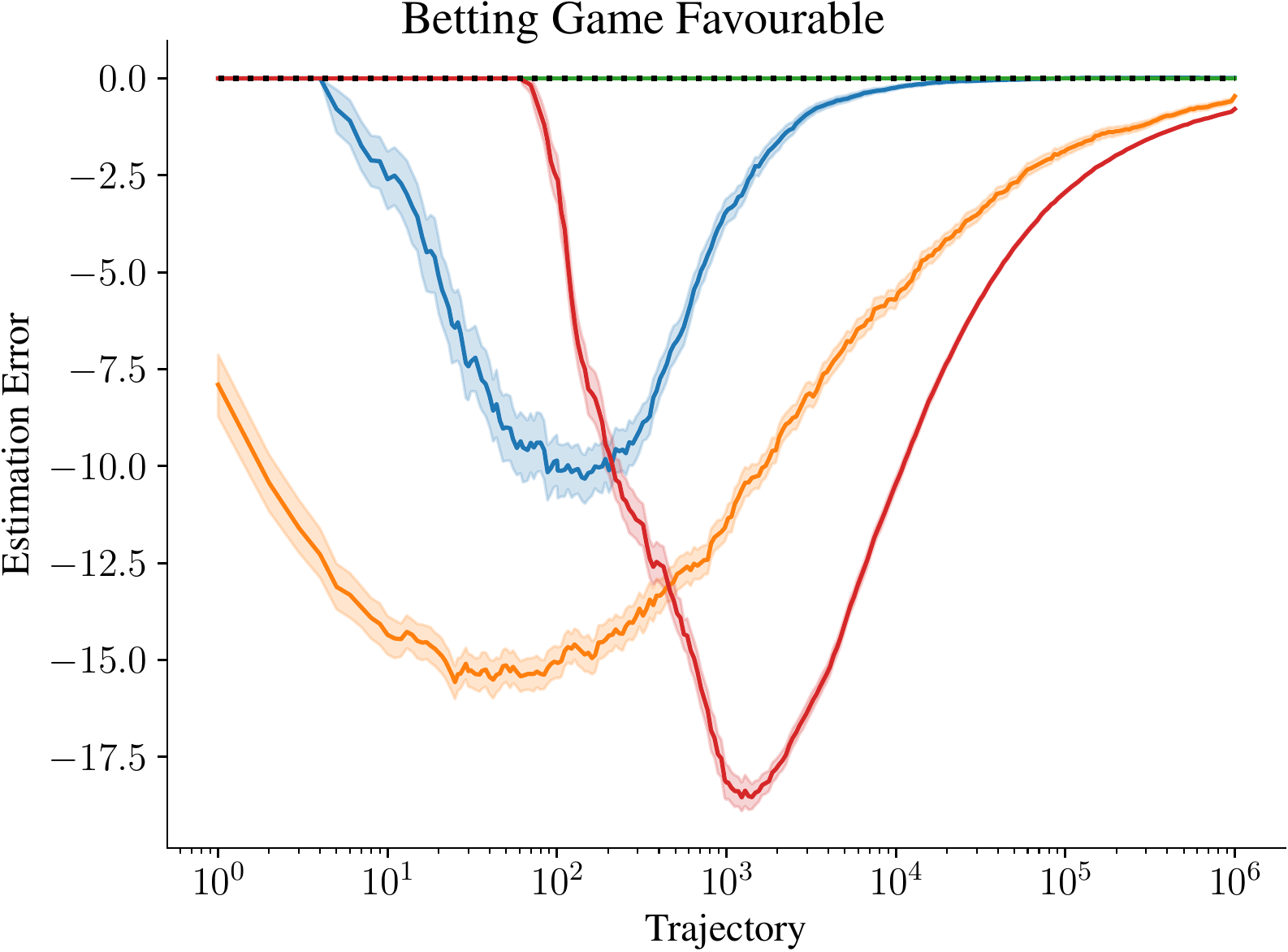}\\
    \includegraphics[width=0.32\textwidth]{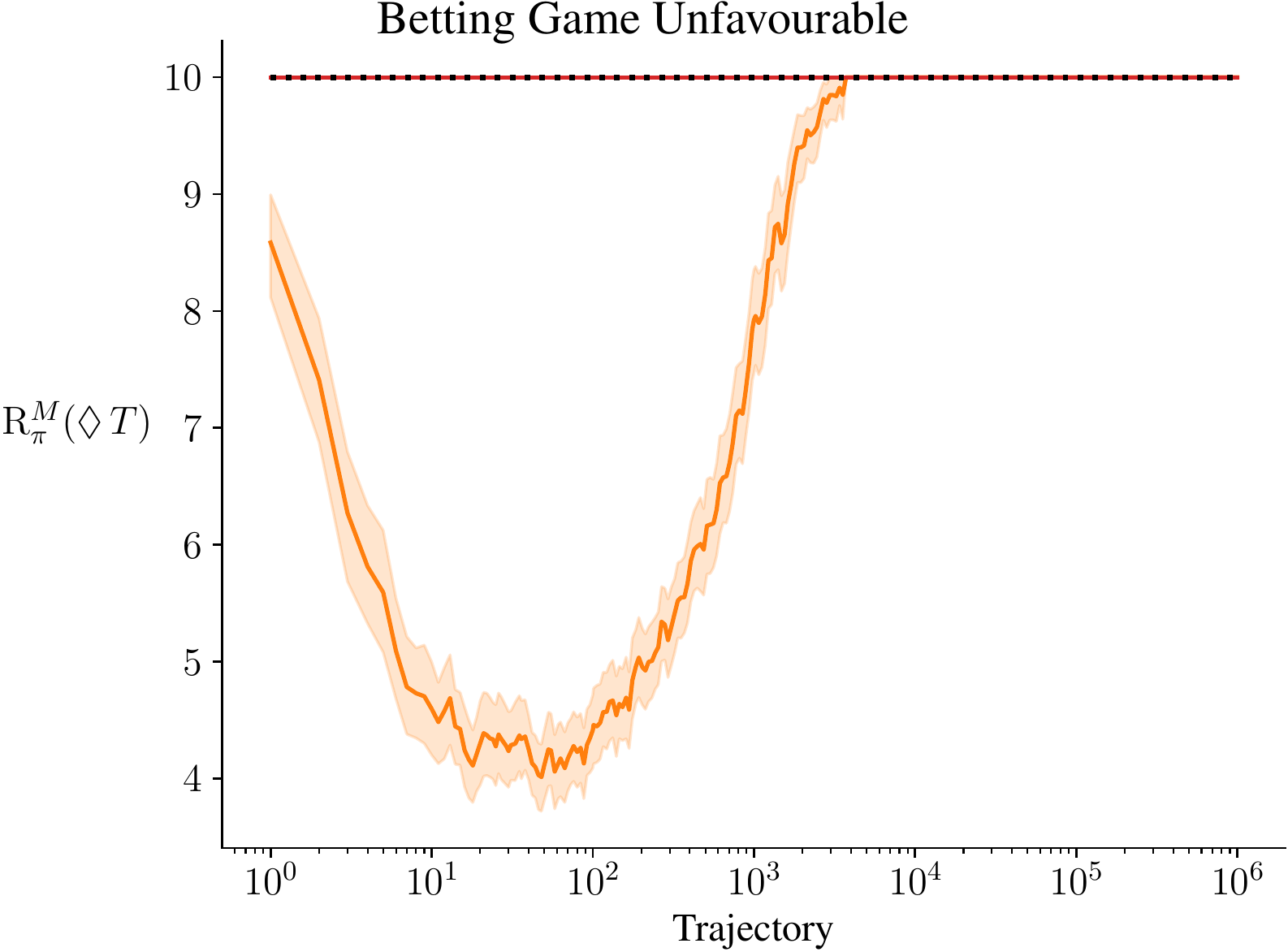}
    \includegraphics[width=0.32\textwidth]{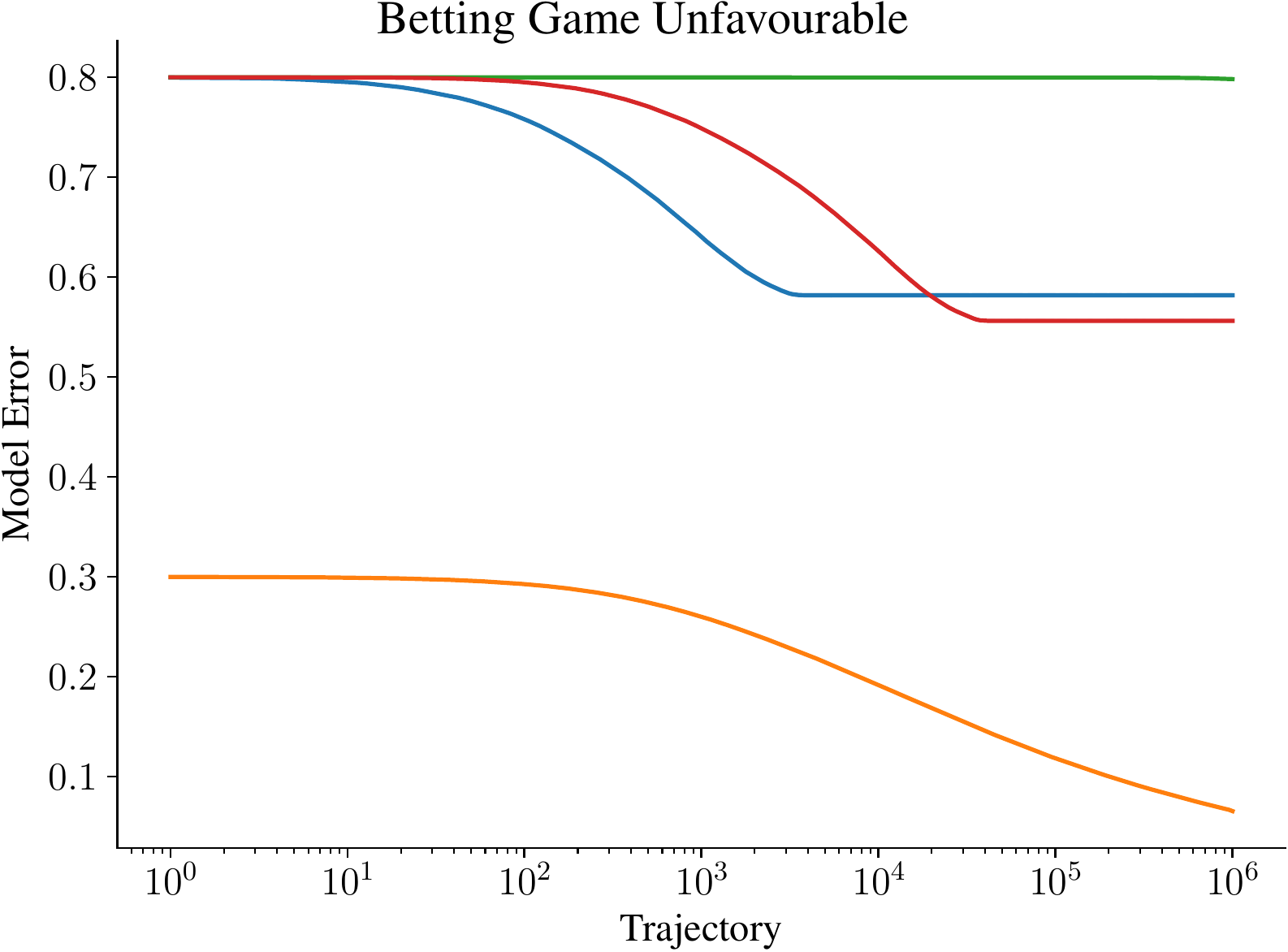}
    \includegraphics[width=0.32\textwidth]{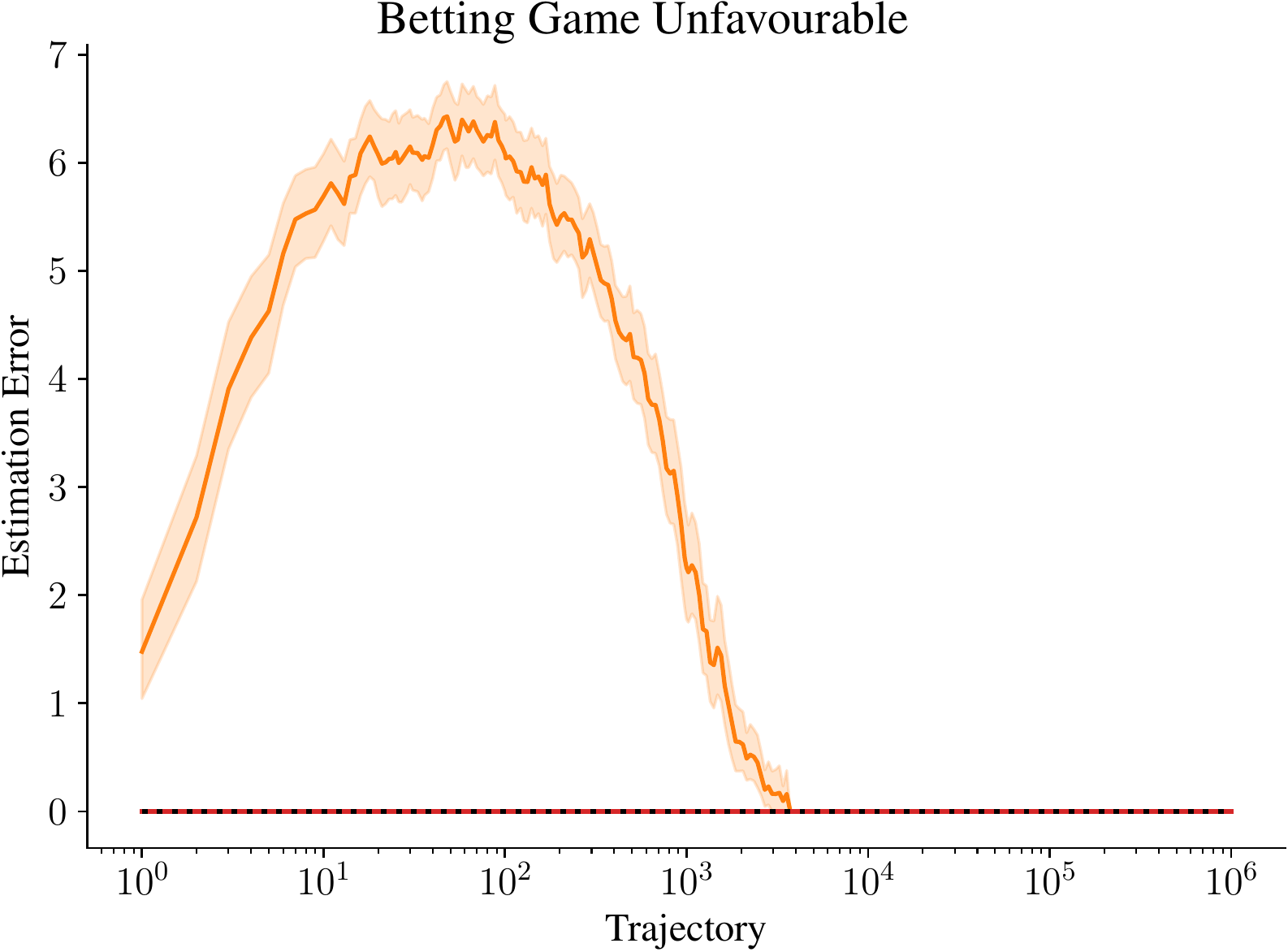}\\    
    \includegraphics[width=0.32\textwidth]{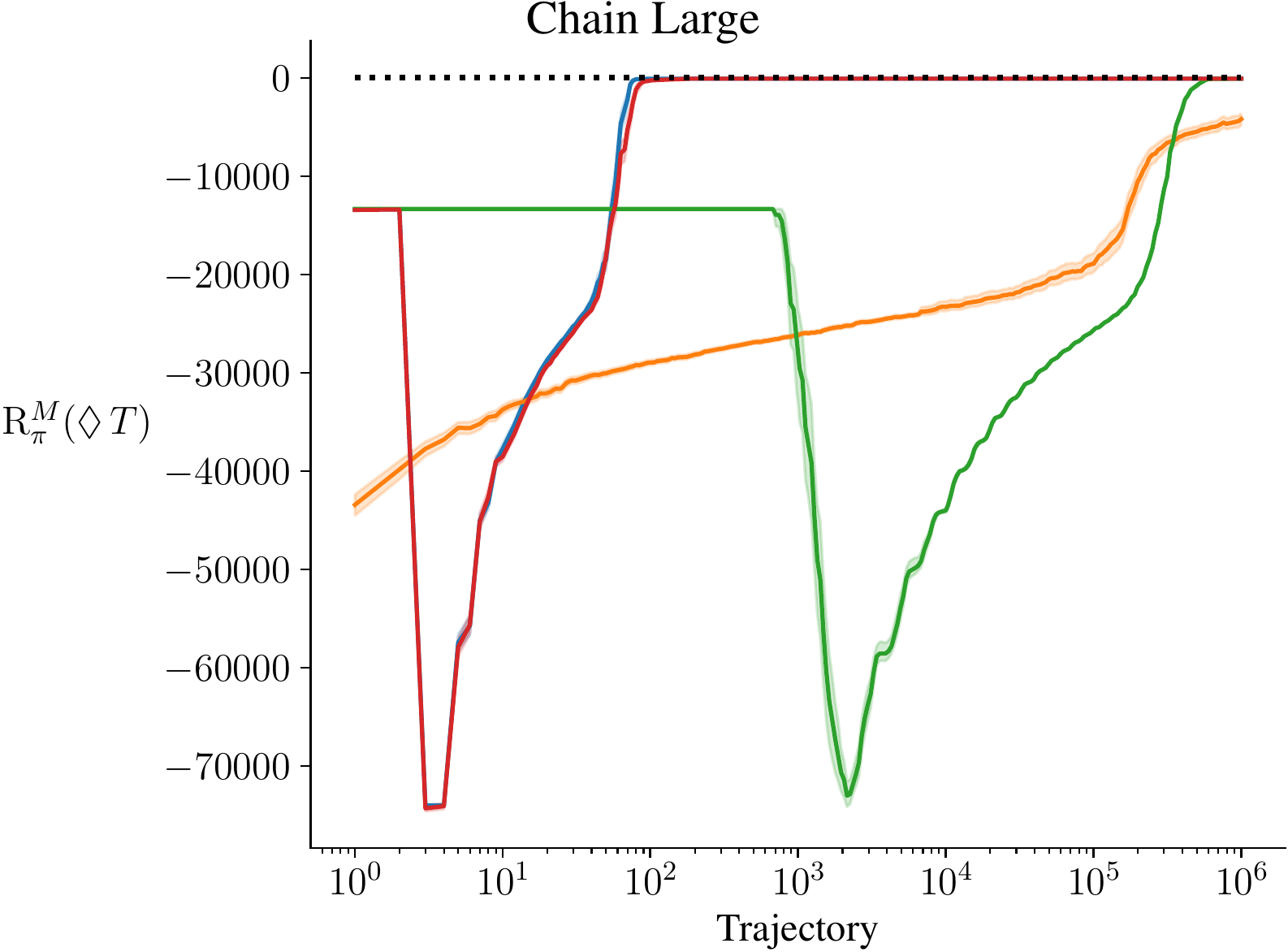}
    \includegraphics[width=0.32\textwidth]{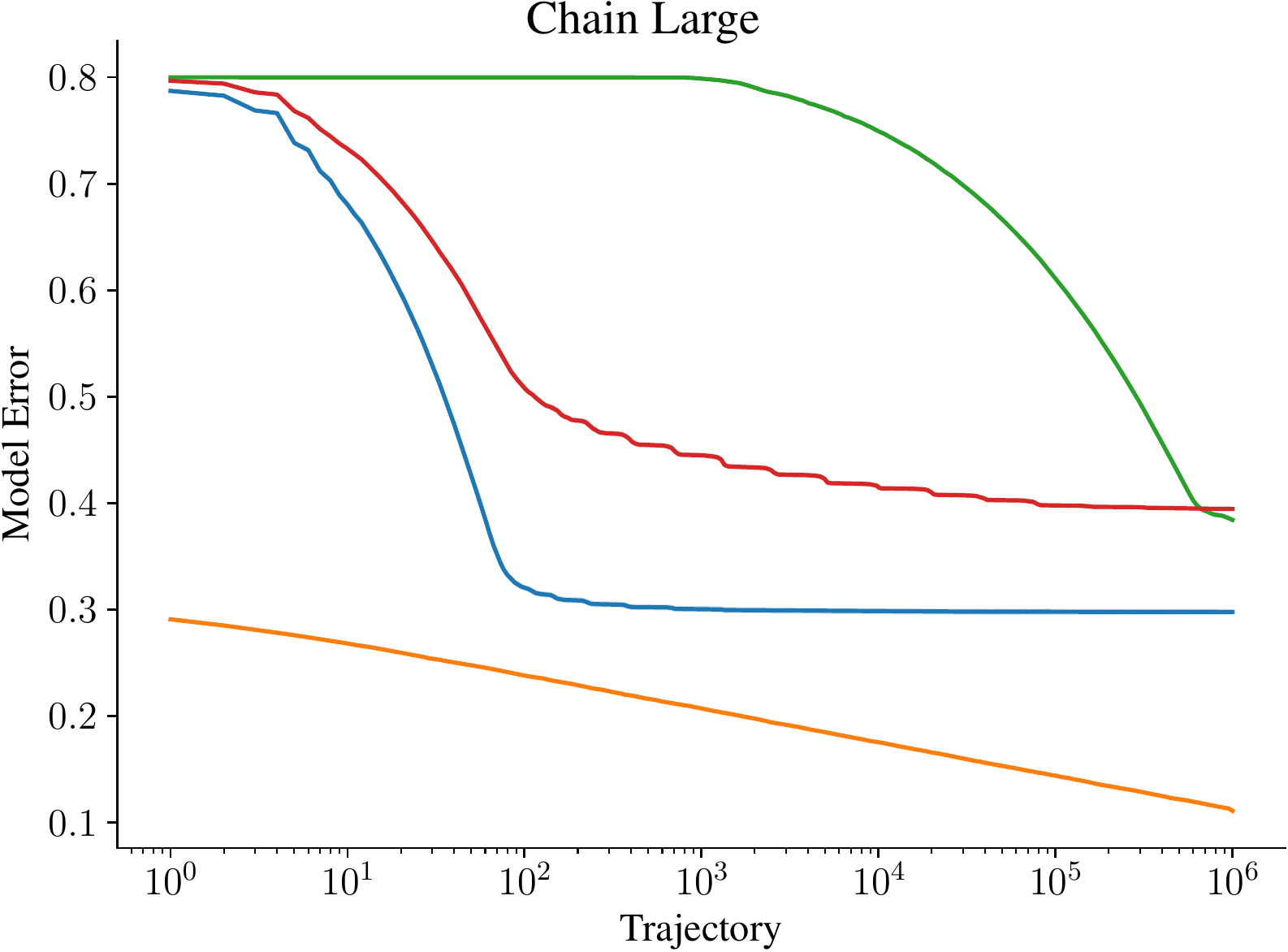}
    \includegraphics[width=0.32\textwidth]{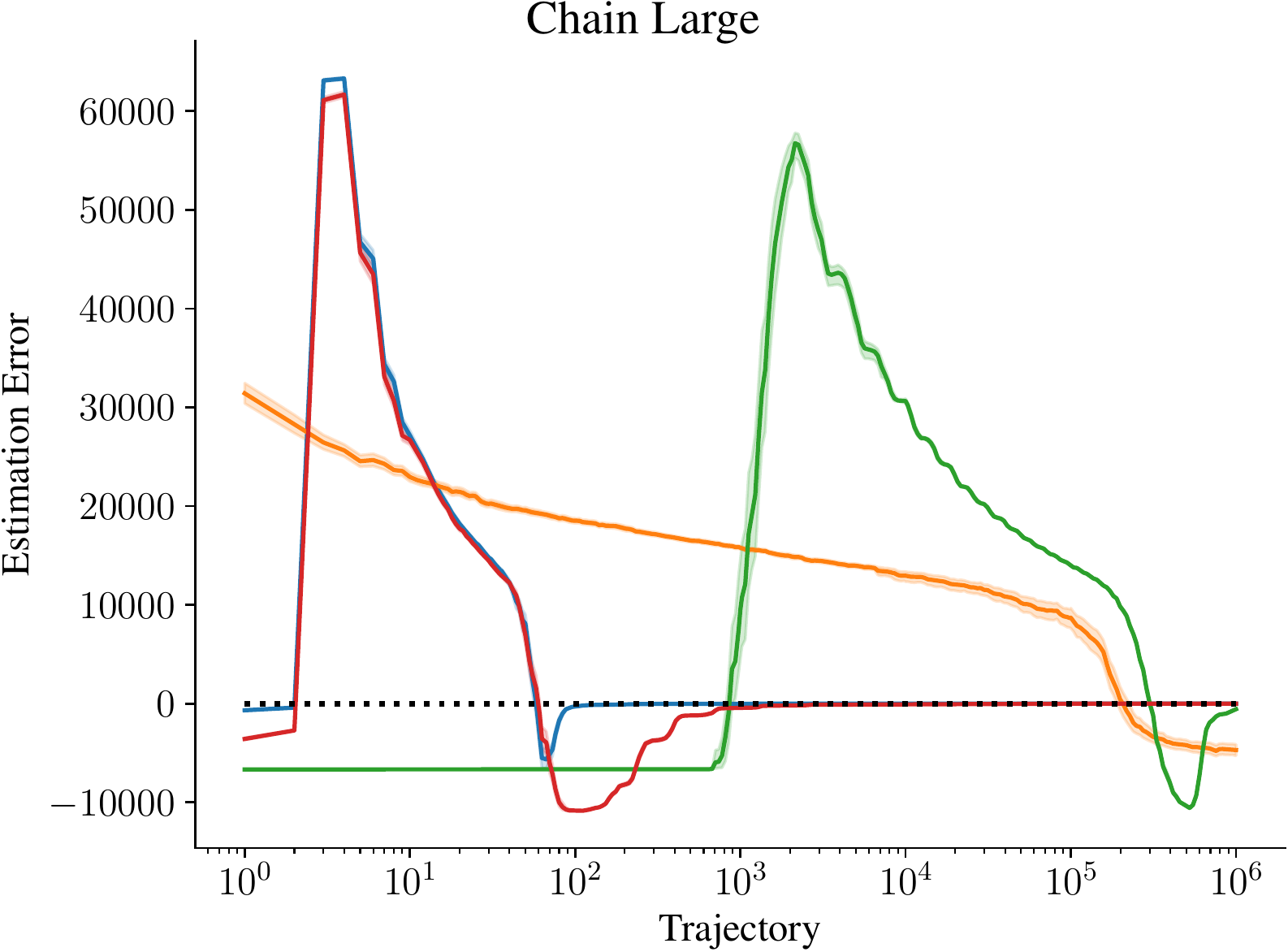}\\
    \includegraphics[width=0.32\textwidth]{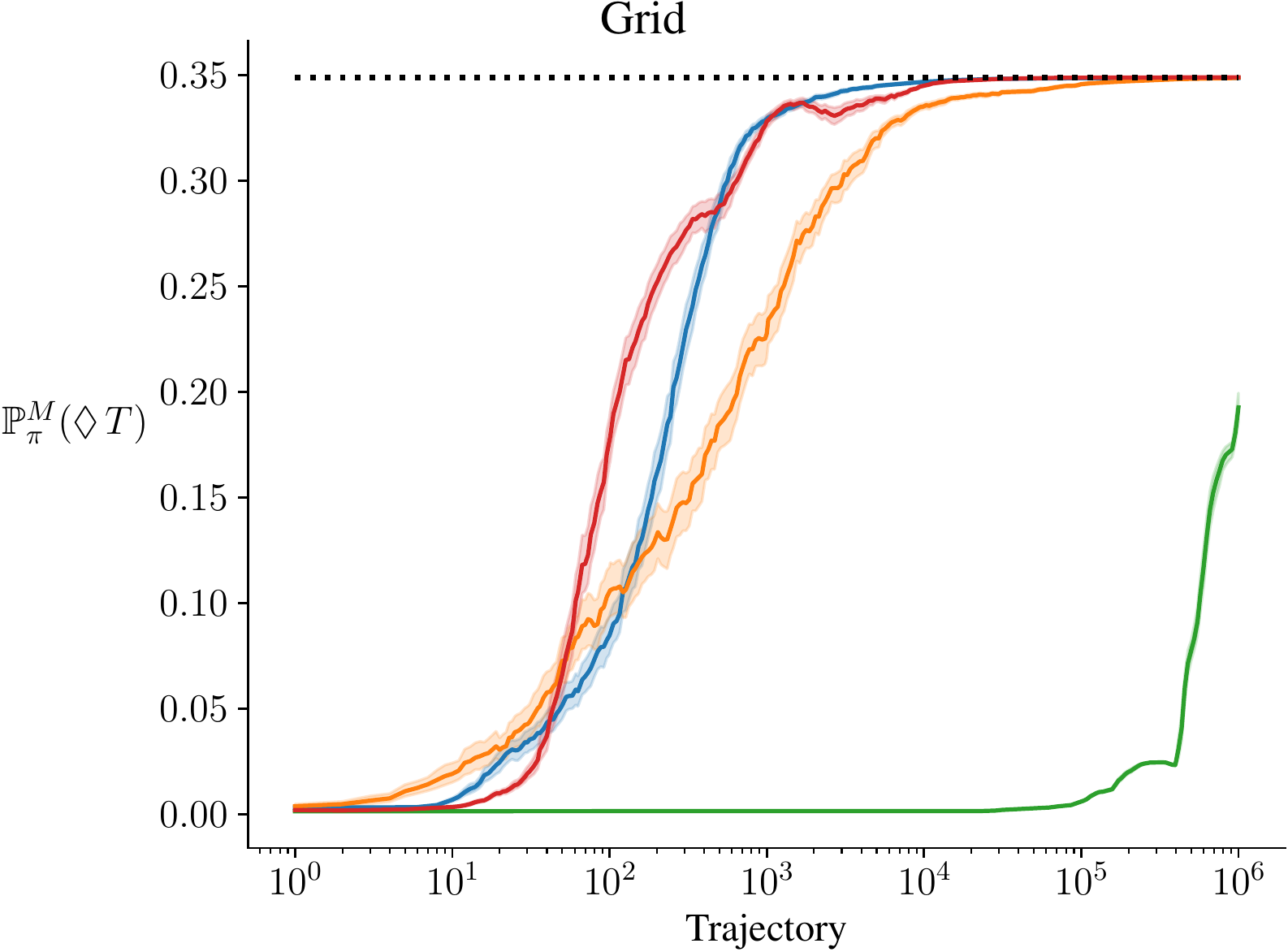}
    \includegraphics[width=0.32\textwidth]{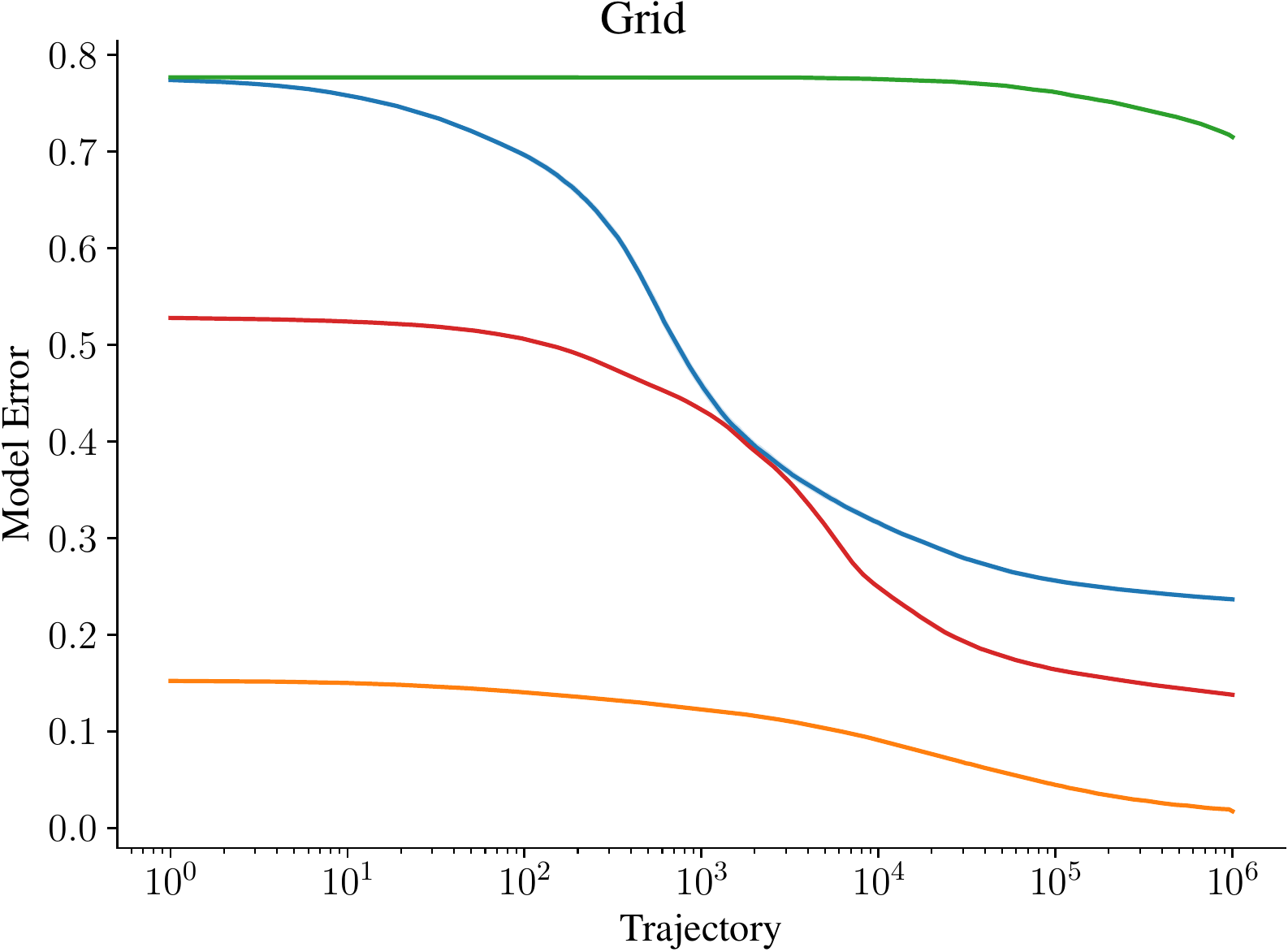}
    \includegraphics[width=0.32\textwidth]{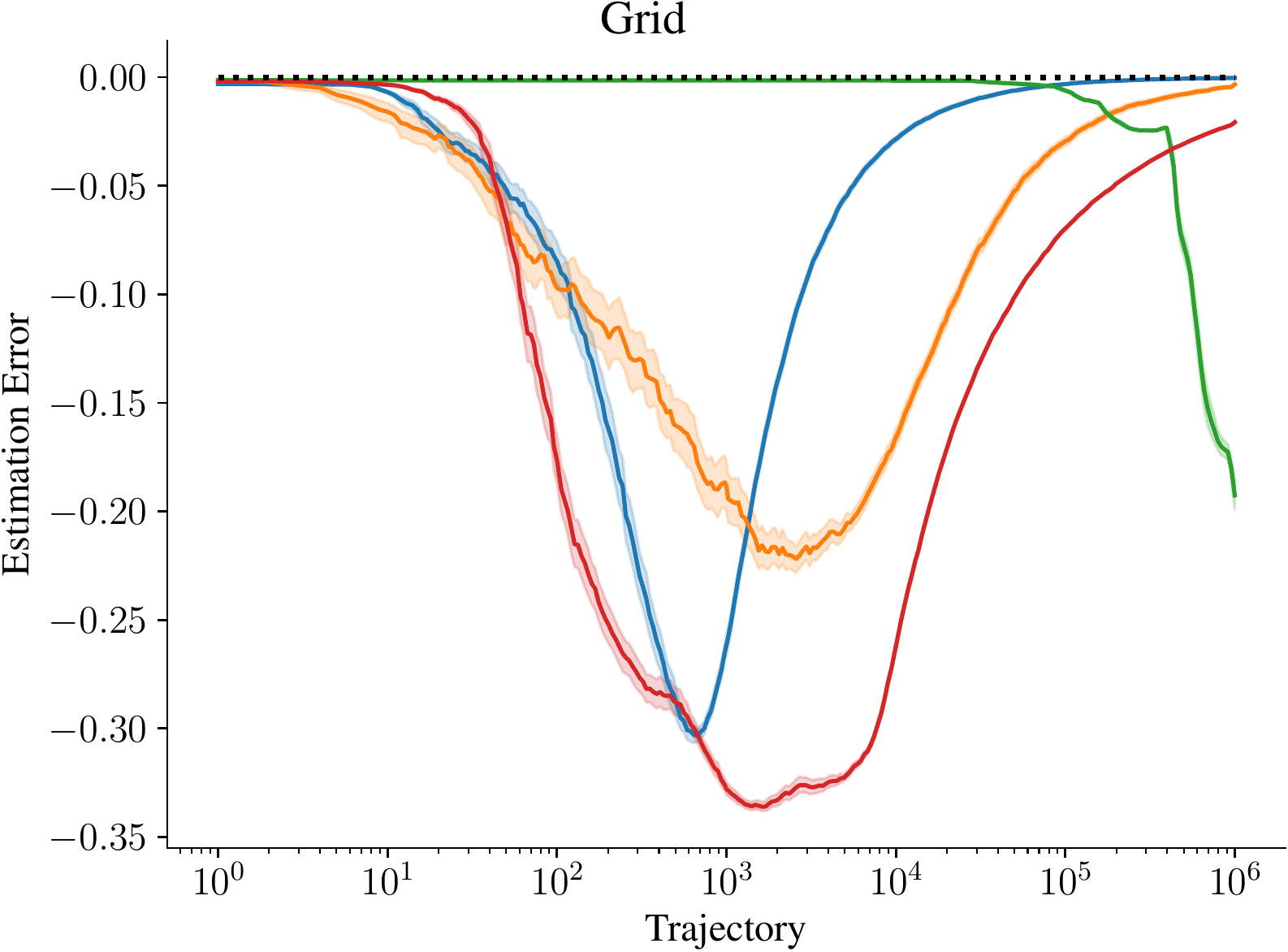}\\
    \caption{Performance, model error, and estimation error for all four learning methods on all six environments.}
    \label{fig:appx:ex:1}
\end{figure*}

 \begin{figure*}[t]
     \centering
     \includegraphics[width=\textwidth]{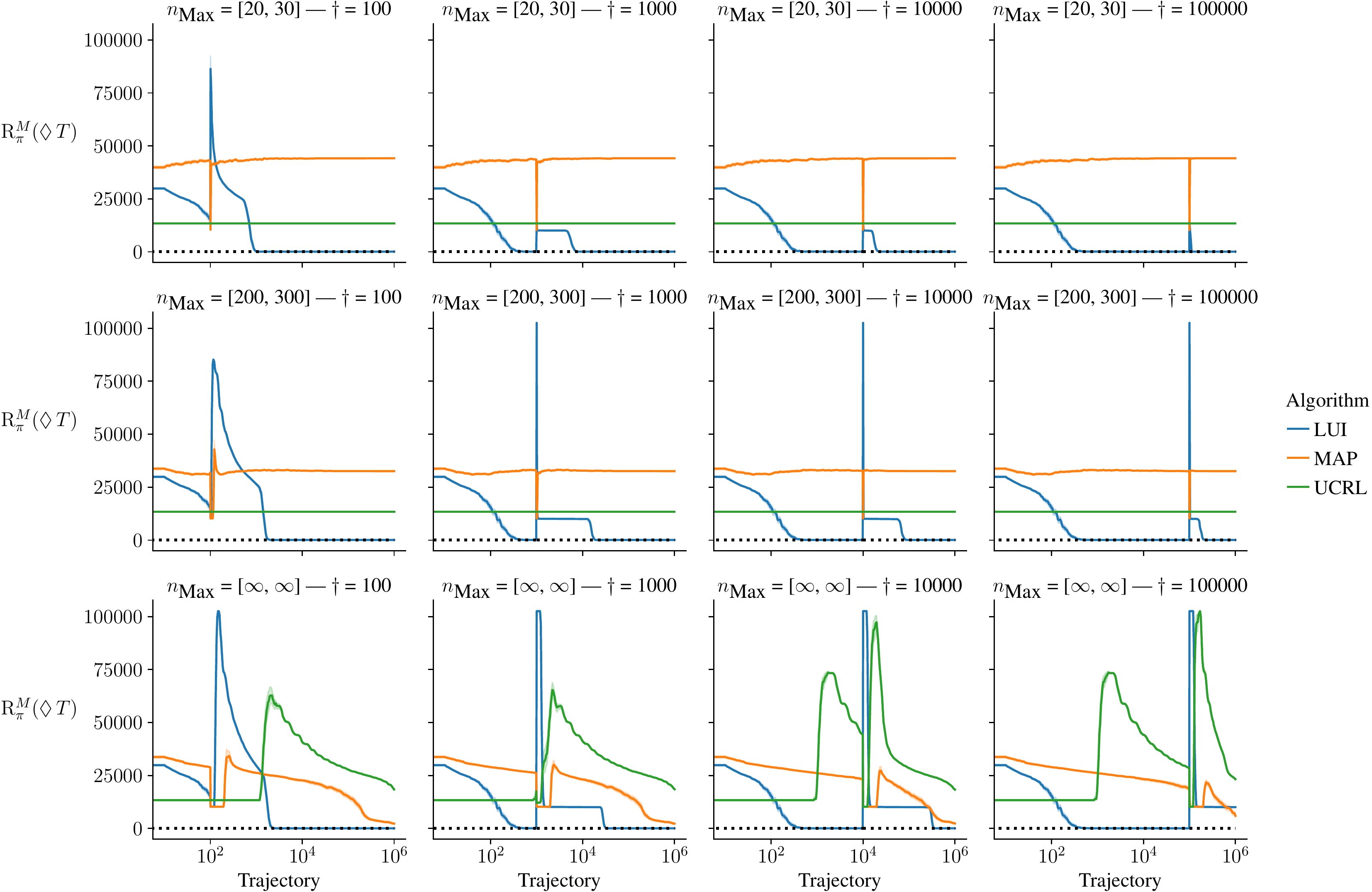}
     \caption{
        Environment change on the Chain environment at different points $\dag \in \{10^2, \dots, 10^5\}$ using randomization parameter $\xi = 0.8$ in the exploration.
     }
     \label{fig:appx:ex:switch_alg_chain08}
 \end{figure*}

 \begin{figure*}[t]
     \centering
     \includegraphics[width=\textwidth]{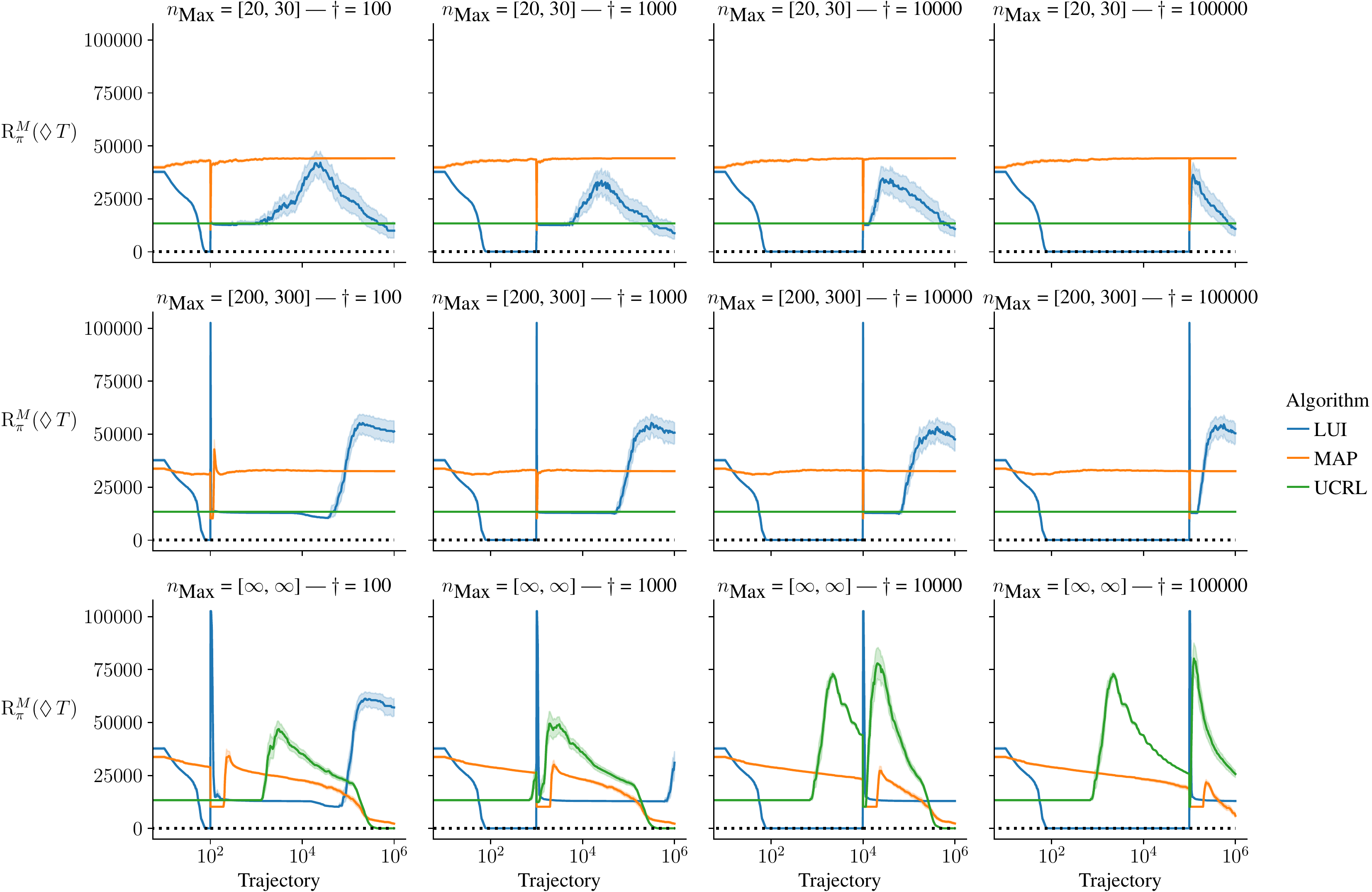}
     \caption{
        Environment change on the Chain environment at different points $\dag \in \{10^2, \dots, 10^5\}$ using randomization parameter $\xi = 1.0$ in the exploration.
     }
     \label{fig:appx:ex:switch_alg_chain10}
 \end{figure*}

 \begin{figure*}[t]
     \centering
     \includegraphics[width=\textwidth]{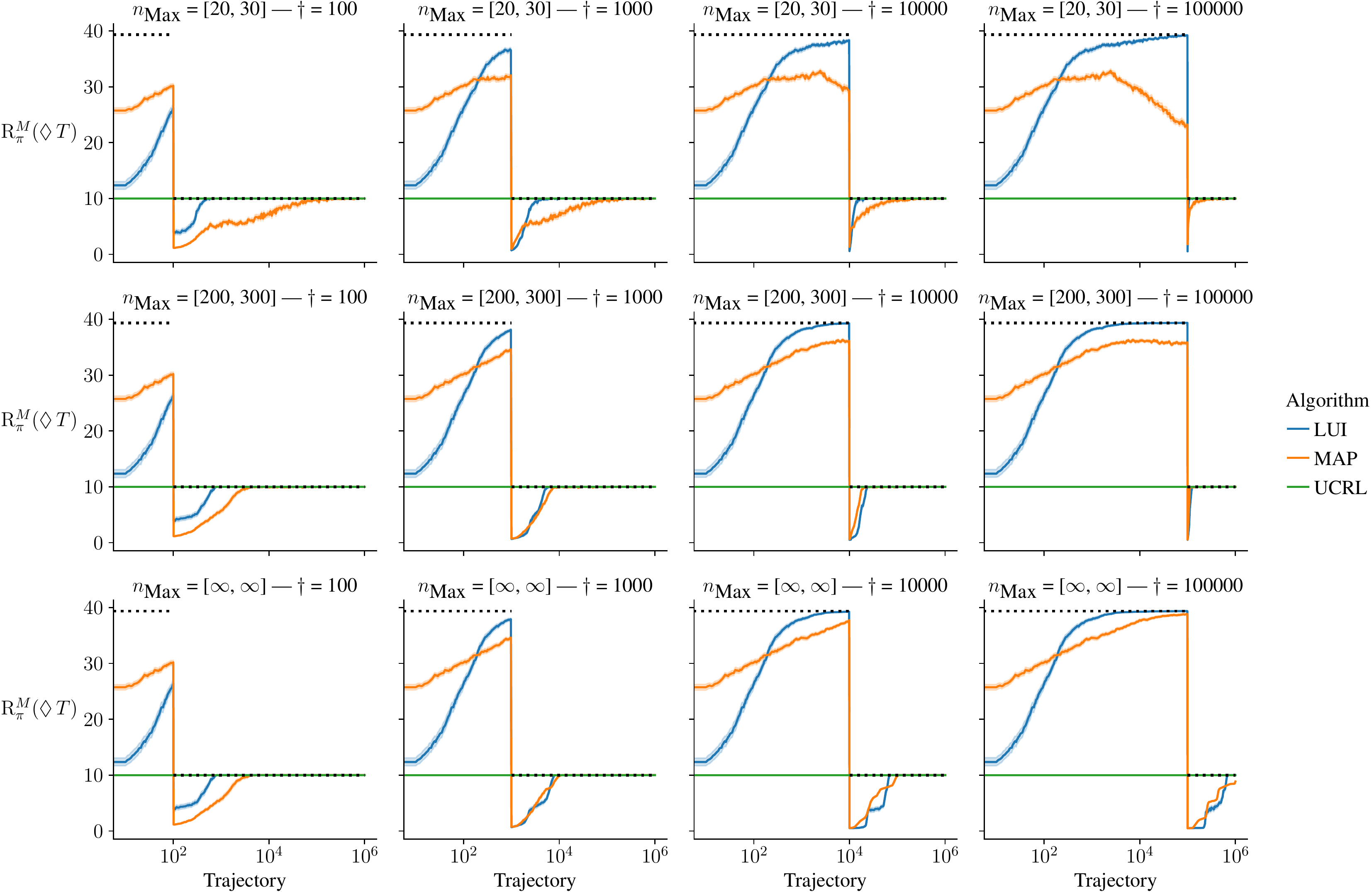}
     \caption{
        Environment change on the Betting Game environment at different points $\dag \in \{10^2, \dots, 10^5\}$ using randomization parameter $\xi = 0.8$ in the exploration.
     }
     \label{fig:appx:ex:switch_alg_bet08}
 \end{figure*}

 \begin{figure*}[t]
     \centering
     \includegraphics[width=\textwidth]{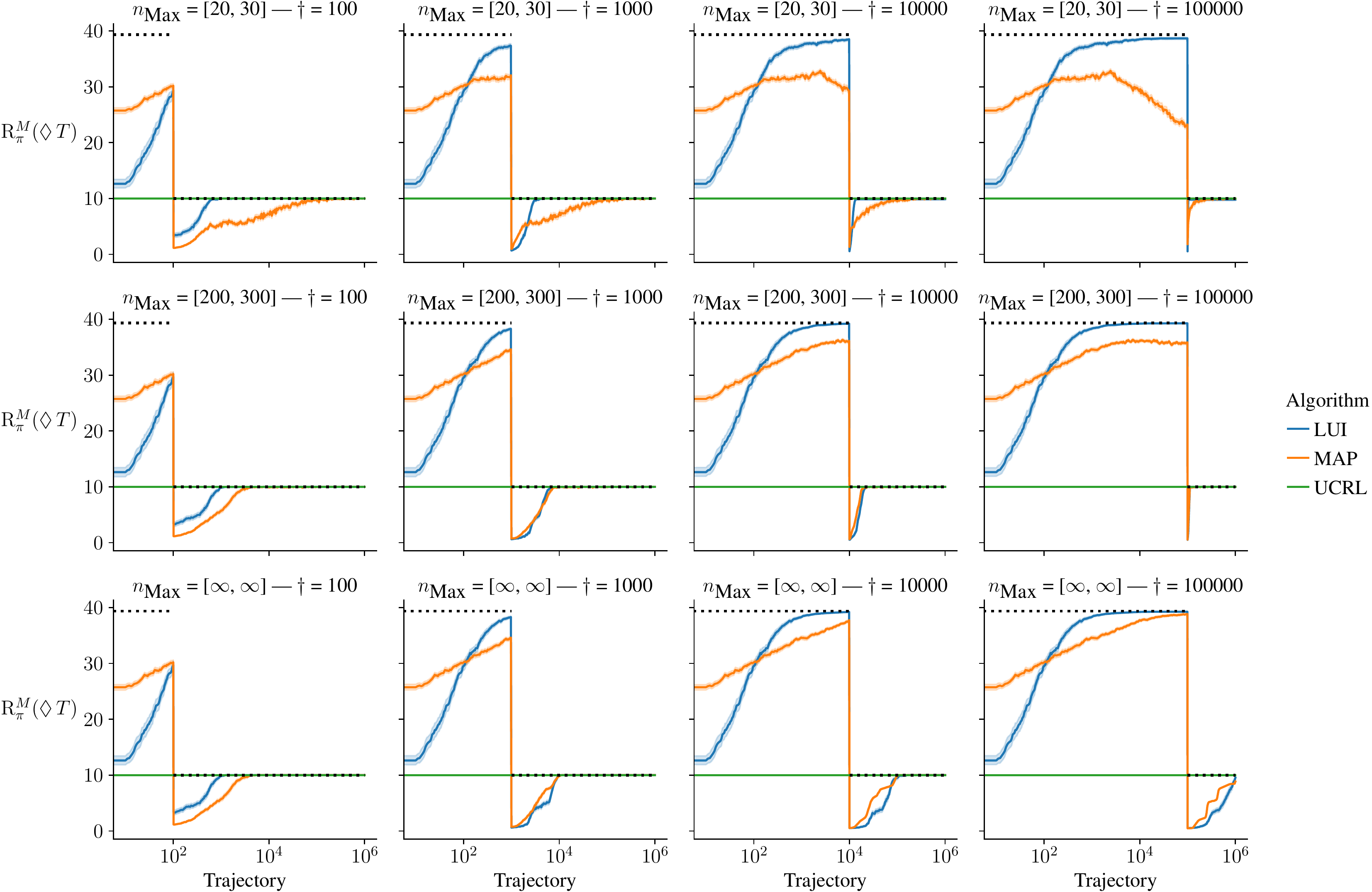}
     \caption{
        Environment change on the Betting Game environment at different points $\dag \in \{10^2, \dots, 10^5\}$ using randomization parameter $\xi = 1.0$ in the exploration.
     }
     \label{fig:appx:ex:switch_alg_bet10}
 \end{figure*}

\end{document}